\documentclass[11pt]{article}

\usepackage[T1]{fontenc}
\usepackage[utf8]{inputenc}
\usepackage{lmodern}

\usepackage{geometry}
\usepackage{indentfirst}    
\usepackage{microtype}
\usepackage{hyperref}       
\usepackage{url}            
\usepackage{booktabs}       
\usepackage{amsfonts, amsmath, amsthm, amssymb} 
\usepackage{mathtools}
\usepackage{nicefrac}       
\usepackage{xcolor}         
\usepackage{enumitem}
\usepackage{algorithm}
\usepackage{algpseudocode}
\usepackage{tikz}      
\usepackage{subcaption}
\usepackage{comment}
\usepackage{natbib}

\newtheorem{theorem}{Theorem}
\newtheorem{lemma}[theorem]{Lemma}

\newtheorem{assumption}{Assumption}
\theoremstyle{definition}

\newtheorem{remark}[theorem]{Remark}

\newcommand{\R}{\mathbb{R}}
\newcommand{\E}{\mathbb{E}}
\newcommand{\Pp}{\mathbb{P}}

\newcommand{\Ind}{\mathbf{1}}

\newcommand{\thst}{\theta^{\star}}
\newcommand{\hth}{\widehat{\theta}}

\newcommand{\D}{\Delta}

\newcommand{\Hist}{\mathcal{H}}

\title{Optimal Contextual Pricing under \\
Agnostic Non-Lipschitz Demand}

\author{
Jianyu Xu \\
Carnegie Mellon University\\
Pittsburgh, PA 15213 \\
\texttt{jianyux@andrew.cmu.edu}
\and
Yu-Xiang Wang \\
University of California San Diego\\
La Jolla, CA 92093 \\
\texttt{yuxiangw@ucsd.edu}
}

\date{}

\begin{document}

\maketitle

\begin{abstract}
We study contextual dynamic pricing with linear valuations and bounded-support agnostic noise, whose induced demand curve may be non-Lipschitz with arbitrary jumps and atoms.
Such discontinuities break the cross-context interpolation arguments used by smooth-demand pricing algorithms, while the best previous method achieved only $\widetilde O(T^{3/4})$ regret.
We propose Conservative-Markdown Redirect-UCB Pricing, a polynomial-time algorithm that combines randomized parameter estimation, conservative residual-grid probing, and confidence-based one-step redirection.
Our algorithm achieves $\widetilde O(T^{2/3})$ optimal regret, matching the known lower bounds of \citet{kleinberg2003value} up to logarithmic factors and improving over the previous upper bound of \citet{xu2022towards}.
Under stochastic well-conditioned contexts, this closes the long-existing open regret gap in linear-valuation contextual pricing under agnostic non-Lipschitz noise distribution.
\end{abstract}

\newpage

\section{Introduction}
\label{sec:intro}
Learning from censored feedback is a recurring challenge across machine learning: the learner makes a decision, but observes only a partial outcome rather than the full reward landscape. Contextual dynamic pricing is a canonical instance of this challenge. A seller observes a feature vector describing the current buyer or product, posts a price in real time, and observes only whether the buyer purchases. The seller never observes the buyer's valuation itself. The goal is to learn a pricing policy whose cumulative revenue is close to that of a clairvoyant policy that knows both the linear valuation parameter and the noise distribution.

A standard model writes the valuation as a linear contextual component plus an idiosyncratic noise:

\vspace{0.5em}
\noindent
\fbox{\parbox{0.95\textwidth}{\textbf{Contextual pricing.} For $t=1,2,\ldots,T$:
        \small
        \begin{enumerate}[leftmargin=*,align=left,itemsep=0pt,topsep=2pt]
            \item A context $x_t\in\mathbb{R}^{d}$ is observed.
            \item The buyer has private valuation $y_t=x_t^\top\theta^{\star}+\xi_t$.
            \item The seller posts a price $p_t\in[0,B]$.
            \item The seller observes only $o_t=\mathbf{1}\{p_t\le y_t\}$.
            \item The seller receives revenue $p_t o_t$.
        \end{enumerate}
    }
}
\vspace{0.5em}

The statistical difficulty of the problem depends strongly on the regularity of the noise distribution. If the noise distribution is known, parametric, log-concave, Lipschitz, or sufficiently smooth, existing algorithms can exploit this structure to transfer information across nearby residual prices. In many pricing environments, however, such regularity is not realistic. Buyers may cluster around psychological thresholds, reservation values may contain mass points, and heterogeneous subpopulations may create sharp jumps in the demand curve. In these cases, the survival function 
may be discontinuous and non-Lipschitz. A small residual perturbation can then change the purchase probability by a constant amount, invalidating interpolation arguments based on smoothness.

This paper studies contextual dynamic pricing in this fully agnostic non-Lipschitz regime. We impose no density, Lipschitz, smoothness, log-concavity, single-crossing, or parametric condition on the noise distribution beyond mild boundedness and i.i.d. assumptions. The noise may have arbitrary atoms and jumps. The formal setup and assumptions are given in Section~\ref{sec:setup}.

\paragraph{Main contribution.}
We introduce \emph{Conservative-Markdown Redirect-UCB Pricing}, a polynomial-time algorithm that achieves $\widetilde O(T^{2/3})$ regret for linear-valuation contextual pricing with fully agnostic bounded noise. 
This rate is optimal in its $T$-dependence up to logarithmic factors. A $T^{2/3}$ lower bound already holds for an interior non-contextual subclass \citep{kleinberg2003value} that can be embedded into our stochastic-context model, and \citet{xu2022towards} establish the same barrier for contextual pricing under Lipschitz demand. Therefore, our result closes the \(T^{3/4}\) versus \(T^{2/3}\) gap for agnostic linear-valuation pricing under the stochastic contextual setting studied here.

\paragraph{Scope of agnosticism.} Our agnosticism is with respect to the residual demand curve: we impose no Lipschitz continuity, smoothness, density, log-concavity, or parametric structure on the valuation noise distribution, but only boundedness to avoid boundary clipping.  

\paragraph{Algorithmic idea.}
The algorithm reduces contextual pricing to confidence-bound learning on a one-dimensional residual grid. A uniformly randomized first stage estimates the linear valuation component; subsequent prices use a conservative markdown so that every binary observation becomes a one-sided probe of two adjacent survival levels, a relation that remains valid even when the demand curve has jumps. In the adaptive stage, an upper-confidence rule probes uncertain grid points directly and redirects to the left once their confidence radius reaches the grid scale, so discontinuity costs are paid only during a limited exploration period.

\paragraph{Technical contribution.}
The analysis compares the oracle value directly with the optimistic score of a maintained residual-grid index, rather than estimating demand at oracle-selected points. Direct probes pay grid, confidence, and adjacent-jump costs, while redirected probes pay only a grid-scale price loss; summing these terms with $\Delta\asymp T^{-1/3}$ yields $\widetilde O(T^{2/3})$ regret. The same residual grid has size $\widetilde O(T^{1/3})$, giving a direct polynomial-time implementation.

\begin{table}[t]
  \caption{Comparison of dynamic pricing regret bounds under different noise assumptions.}
  \label{tab:comparison}
  \centering
  \begin{tabular}{lll}
    \toprule
    Noise distribution assumptions & Best known regret & Reference \\
    \midrule
    $O(1/T)$ standard deviation & $O(\log T)$ & \citet{cohen2020feature_journal} \\
    known and log-concave & $O(\log T)$ & \citet{javanmard2019dynamic} \\
    Parametric Family & $\widetilde O(\sqrt T)$ & \citet{ban2021personalized} \\
    Unknown, second-order smooth & $\widetilde O(T^{2/3})$ & \citet{luo2022contextual} \\
    Unknown, $k$th-order smooth & $\widetilde O(T^{\frac{2k+1}{4k-1}})$ & \citet{fan2024policy} \\
    Unknown, Lipschitz & $\widetilde O(T^{2/3})$ & \citet{tullii2024improved} \\
    Unknown, Lipschitz & $\widetilde \Omega(T^{2/3})$ & \citet{xu2022towards} \\
    Agnostic, bounded & $\widetilde O(T^{3/4})$ & \citet{xu2022towards} \\
    Agnostic, non-contextual & $\Omega(T^{2/3})$ & \citet{kleinberg2003value} \\
    \textbf{Agnostic, bounded} & $\bf{\widetilde O(T^{2/3})}$ & Theorem~\ref{thm:main}, \textbf{this work} \\
    \bottomrule
  \end{tabular}
\end{table}

Table~\ref{tab:comparison} summarizes the regret landscape for linear-valuation contextual pricing under different noise assumptions. The rest of the paper is organized as follows. Section~\ref{sec:related} reviews related work. Section~\ref{sec:setup} formalizes the model and assumptions. Section~\ref{sec:alg} presents the algorithm. Section~\ref{sec:main} gives the main regret theorem and proof roadmap. Section~\ref{sec:experiments} reports numerical experiments, and Section~\ref{sec:conclusion} discusses limitations, ethical considerations, and conclusions. Full proofs appear in the appendix.

\section{Related Work}
\label{sec:related}
We review the closest literature on contextual dynamic pricing with unknown demand. Additional background on parametric pricing, non-contextual posted-price learning, confidence-bound methods, and adjacent contextual pricing models appears in Appendix~\ref{sec:app_more_related_works}.

\paragraph{Contextual pricing with smooth or Lipschitz noise.}
A large body of work obtains strong regret guarantees by imposing regularity on the noise distribution. \citet{javanmard2019dynamic} study dynamic pricing with known log-concave noise, where the revenue function is well behaved enough for greedy pricing after parameter learning. \citet{ban2021personalized} obtain $\widetilde O(\sqrt T)$ regret under parametric demand assumptions. \citet{luo2022contextual} achieve $\widetilde O(T^{2/3})$ regret under second-order smoothness of the noise distribution through an explore-then-UCB strategy. \citet{fan2024policy} exploit higher-order smoothness and introduce a semiparametric parameter-estimation method based on uniformly randomized prices, which we also use in Stage~1. \citet{tullii2024improved} obtain $\widetilde O(T^{2/3})$ regret under Lipschitz noise via cross-context learning. These works use smoothness to transfer demand information between nearby residuals. Such transfer is unavailable in the agnostic setting considered here, where a residual perturbation of arbitrarily small size may cross an atom and change demand by a constant amount.

\paragraph{Fully agnostic contextual pricing.}
The closest work is \citet{xu2022towards}, who study linear-valuation contextual pricing with an unknown noise distribution and no smoothness assumptions. Their D2-EXP4 algorithm discretizes a large policy class and applies an adversarial bandit method, obtaining $\widetilde O(T^{3/4})$ regret for adversarial context sequences. They also prove a $\widetilde\Omega(T^{2/3})$ lower bound for contextual pricing under a Lipschitz subclass, which applies to the larger agnostic model. Our algorithm works directly on a one-dimensional residual grid rather than enumerating a high-dimensional policy class. The conservative markdown creates a stable one-sided observation model on this grid, and the redirect-UCB rule controls discontinuity-induced bias through confidence-radius accounting. This yields the optimal $\widetilde O(T^{2/3})$ regret rate in polynomial time.

\paragraph{Lower bounds.}
The \(\widetilde O(T^{2/3})\) rate is tight in its dependence on \(T\). Xu and Wang~\citep{xu2022towards} prove a \(\widetilde\Omega(T^{2/3})\) lower bound for contextual pricing under Lipschitz noise, showing that the \(T^{2/3}\) barrier already appears in a regular subclass of demand functions. In addition, the classical non-contextual lower bound of \citet{kleinberg2003value} can be embedded into our model by using a constant linear valuation together with independent dummy covariates to satisfy the full-rank stochastic-context condition. The hard valuation distributions can be placed in an interior price interval, so bounded support, zero-mean normalization, and the price-buffer condition are satisfied after the usual recentering of the noise into the intercept. 
Appendix~\ref{app:lower-bound-alignment} gives the formal embedding.

\section{Problem Setup}
\label{sec:setup}
We study contextual dynamic pricing with linear valuations and binary purchase feedback. At each round $t=1,\ldots,T$, the seller observes a context $x_t\in\R^d$. The customer's valuation is $y_t=\langle x_t,\thst\rangle+\xi_t$, where $\thst\in\R^d$ is unknown and $\xi_t$ is an unobserved noise. The seller posts a price $p_t\in[0,B]$ based on past observations and observes only $o_t=\Ind\{p_t\le y_t\}$.

We write the demand curve in residual coordinates through the survival function $S(w):=\Pp(\xi_t\ge w)$. This convention handles atoms without additional notation: if $F(w)=\Pp(\xi_t\le w)$ is the usual right-continuous CDF, then $S(w)=1-F(w^-)$, so $\E[o_t\mid x_t,p_t]=S(p_t-\langle x_t,\thst\rangle)$ for arbitrary noise distributions. We extend $S$ outside the support by setting $S(w)=1$ for $w\le -c$ and $S(w)=0$ for $w>c$.

The expected revenue at context $x$ and price $p$ is $\pi(x,p):=pS(p-\langle x,\thst\rangle)$. Let
\begin{equation}\label{eq:regret}
    \mathcal R_T:=\sum_{t=1}^T\left\{\max_{p\in[0,B]}\pi(x_t,p)-\pi(x_t,p_t)\right\},
    \qquad
    R_T:=\E[\mathcal R_T].
\end{equation}
For compactness, write $u_t:=\langle x_t,\thst\rangle$ and $V_t(w):=(u_t+w)S(w)$. The oracle value in round $t$ is $V_t^\star:=\max_{w\in[-c,c]}V_t(w)$; residuals outside $[-c,c]$ cannot improve revenue under the bounded-support model.

\subsection{Assumptions}\label{subsec:assumptions}

We impose boundedness and identifiability conditions on the linear contextual component, while leaving the demand curve otherwise unrestricted: $S$ may be discontinuous, non-Lipschitz, and may contain arbitrary jumps and atoms.

\begin{assumption}[Bounded support, exogeneity, and normalization]\label{ass:bdd}
There exist known constants $B_x,B_\theta,c,B<\infty$ such that, almost surely, $\|x_t\|_2\le B_x$, $\|\thst\|_2\le B_\theta$, $\xi_t\in[-c,c]$, and $0\le y_t\le B$. The noises $\{\xi_t\}_{t=1}^T$ are i.i.d. from an unknown distribution $F$ and independent of the context sequence. We use the location normalization $\E[\xi_t]=0$.
\end{assumption}

The valuation bound $0\le y_t\le B$ is used only in Stage~1 to convert uniformly randomized prices into an unbiased linear signal. The zero-mean normalization is without loss of generality when the context contains an intercept, since a nonzero noise mean can be absorbed into the intercept coefficient. The independence condition can be weakened to $\E[\xi_t\mid x_t]=0$ for the Stage~1 argument.

\begin{assumption}[Stochastic contexts]\label{ass:iid}
The contexts $\{x_t\}_{t=1}^T$ are i.i.d. from a distribution $\mathcal D_x$ supported on $\{x:\|x\|_2\le B_x\}$, and $\lambda_{\min}(\E[x_tx_t^\top])\ge\lambda_0>0$.
\end{assumption}

Assumption~\ref{ass:iid} ensures that the linear parameter is identifiable from the randomized Stage~1 observations.

\begin{assumption}[Buffered admissible prices]\label{ass:buffer}
There exists a constant $\kappa>0$ such that $\langle x_t,\thst\rangle-c\ge\kappa$ and $\langle x_t,\thst\rangle+c\le B$ almost surely. We assume the horizon is large enough that $2\D\le\kappa$, where $\D$ is the Stage~1 target accuracy in Algorithm~\ref{alg:main}; smaller horizons are absorbed into the problem-dependent constant.
\end{assumption}

\paragraph{Scope of the assumptions.}
Bounded support and bounded valuations make the residual grid finite and justify the randomized-price regression in Stage~1. The i.i.d. full-rank context condition is used only for estimating the linear component, while the price-buffer condition avoids boundary clipping in the markdown probes. None of these assumptions imposes continuity, density, Lipschitzness, smoothness, slope, or margin conditions on \(F\) or \(S\); arbitrary atoms and jumps remain allowed.


\subsection{Notation}\label{subsec:notation}

Let $\hat u_t:=\langle x_t,\hth\rangle$ be the predicted linear value after Stage~1, and define the Stage~1 good event
\begin{equation}\label{eq:theta-good-event}
    \mathcal E_\theta:=\left\{\max_{1\le t\le T}|\hat u_t-u_t|\le\D\right\}.
\end{equation}
Given $\D$, define a residual grid with spacing $2\D$ by $M:=\lceil c/\D\rceil$ and $w_i:=-c+2i\D$ for $i=0,1,\ldots,M$. The queried index set is $\mathcal I:=\{0,1,\ldots,M-1\}$. We use the boundary convention $w_{-1}:=w_0-2\D$ and $S(w_{-1}):=1$.

For each queried index $j\in\mathcal I$, define the adjacent jump height $\alpha_j:=S(w_{j-1})-S(w_j)\ge0$. Since $S$ is non-increasing and bounded in $[0,1]$, $\sum_{j=0}^{M-1}\alpha_j\le1$. This quantity captures the maximum discontinuity cost associated with probing the interval $[w_{j-1},w_j]$. Let $\Hist_t$ denote the sigma-algebra generated by observations through round $t$. We use $\widetilde O(\cdot)$ to hide factors polylogarithmic in $T$ and polynomial in $(d,B_x,B_\theta,B,c,1/\lambda_0,1/\kappa)$. We focus on the horizon dependence and treat $d$ and other problem constants as fixed.

\section{Algorithm}
\label{sec:alg}
We present Conservative-Markdown Redirect-UCB Pricing (CMRUP) in Algorithm~\ref{alg:main}. The algorithm has three stages. Stage~1 estimates the linear valuation parameter. Stage~2 collects initial demand observations on a residual grid. Stage~3 uses upper confidence bounds on the residual grid, together with a one-step redirect rule, to choose prices adaptively.

\paragraph{Markdown probes.}
For a queried index $j\in\mathcal I=\{0,\ldots,M-1\}$, define the probe price
\begin{equation}\label{eq:probe-action}
    p^+_{t,j}:=\operatorname{clip}_{[0,B]}\bigl(\hat u_t+w_{j+1}-3\D\bigr).
\end{equation}
On $\mathcal E_\theta$ and under Assumption~\ref{ass:buffer}, clipping does not occur in Stages~2--3. Since $|\hat u_t-u_t|\le\D$, the effective residual of this probe satisfies
\[
    p^+_{t,j}-u_t
    =w_{j+1}-3\D+(\hat u_t-u_t)
    \in [w_{j-1},w_j].
\]
Thus the conditional purchase probability of probe $j$ obeys
\begin{equation}\label{eq:probe-sandwich-main}
    S(w_j)
    \le
    \E[o_t\mid \Hist_{t-1},x_t,\text{probe }j]
    \le
    S(w_{j-1}).
\end{equation}
This deterministic sandwich is the basic observation model used by the algorithm on the residual grid.

\paragraph{Empirical means and confidence radii.}
For each queried index $j$, let $n_j(t)$ be the number of times before round $t$ that probe action $j$ has been played, and let $\hat m_j(t)$ be the empirical mean of the corresponding binary observations. If $n_j(t)=0$, set $\hat m_j(t)=0$. Define
\begin{equation}\label{eq:ucb-radius}
    b_j(t):=C_{\rm ucb}\sqrt{\frac{\log T}{\max\{1,n_j(t)\}}}.
\end{equation}
The constant $C_{\rm ucb}$ is chosen large enough for the uniform concentration event in Lemma~\ref{lem:raw-confidence}.

\paragraph{Optimistic scores.}
At a Stage~3 round $t$, each queried index receives the score
\begin{equation}\label{eq:redirect-score}
    U_{t,j}
    :=
    \bigl(\hat u_t+w_{j+1}+\D\bigr)
    \min\{1,\hat m_j(t)+b_j(t)\}.
\end{equation}
The first factor is an optimistic price multiplier for residuals in $[w_j,w_{j+1}]$, and the second factor is an upper confidence estimate for $S(w_j)$.

\begin{algorithm}[htbp]
\caption{Conservative-Markdown Redirect-UCB Pricing (CMRUP)}
\label{alg:main}
\begin{algorithmic}[1]
\State \textbf{Input:} horizon $T$; bounds $B,B_x,B_\theta,c,\lambda_0$; constants $C_\theta,C_{\rm ucb}$.
\State $T_1\gets\lceil T^{2/3}\rceil$, $T_w\gets\lceil T^{2/3}\rceil$.
\Statex
\State \textbf{Stage 1: parameter estimation.}
\For{$t=1,\ldots,T_1$}
    \State Post $p_t\sim\mathrm{Unif}[0,B]$ independently; observe $o_t$.
\EndFor
\State Set $z_t\gets B o_t$ for $t\le T_1$.
\State Compute
\[
    \hth:=\left(\sum_{t=1}^{T_1}x_tx_t^\top\right)^\dagger
    \sum_{t=1}^{T_1}z_tx_t,
\]
where $\dagger$ denotes the Moore--Penrose pseudoinverse. 
On the covariance-good event used in the proof, this equals the usual inverse.
\State Set
\[
    \D\gets C_\theta\frac{B B_x^2}{\lambda_0}
    \sqrt{\frac{d\log T}{T_1}}.
\]
\State Construct the grid $w_i=-c+2i\D$, $i=0,\ldots,M$, where $M=\lceil c/\D\rceil$.
\State Initialize $n_j\gets0$ and $\hat m_j\gets0$ for all $j\in\mathcal I=\{0,\ldots,M-1\}$.
\Statex
\State \textbf{Stage 2: warmup probes.}
\For{$t=T_1+1,\ldots,T_1+T_w$}
    \State Observe $x_t$ and set $\hat u_t\gets\langle x_t,\hth\rangle$.
    \State Sample $J_t\sim\mathrm{Unif}(\mathcal I)$.
    \State Post $p_t\gets\operatorname{clip}_{[0,B]}(\hat u_t+w_{J_t+1}-3\D)$; observe $o_t$.
    \State Update $n_{J_t}$ and $\hat m_{J_t}$ using $o_t$.
\EndFor
\Statex
\State \textbf{Stage 3: adaptive pricing.}
\For{$t=T_1+T_w+1,\ldots,T$}
    \State Observe $x_t$ and set $\hat u_t\gets\langle x_t,\hth\rangle$.
    \For{$j\in\mathcal I$}
        \State $b_j\gets C_{\rm ucb}\sqrt{\log T/\max\{1,n_j\}}$.
        \State $U_{t,j}\gets(\hat u_t+w_{j+1}+\D)\min\{1,\hat m_j+b_j\}$.
    \EndFor
    \State $j_t\in\arg\max_{j\in\mathcal I}U_{t,j}$.
    \If{$b_{j_t}>\D$}
        \State $a_t\gets j_t$. \Comment{direct probe}
    \ElsIf{$j_t=0$}
        \State $a_t\gets0$. \Comment{boundary case}
    \Else
        \State $a_t\gets j_t-1$. \Comment{one-step redirect}
    \EndIf
    \State Post $p_t\gets\operatorname{clip}_{[0,B]}(\hat u_t+w_{a_t+1}-3\D)$; observe $o_t$.
    \State Update $n_{a_t}$ and $\hat m_{a_t}$ using $o_t$.
\EndFor
\end{algorithmic}
\end{algorithm}

\subsection{Mechanics of the Algorithm}\label{subsec:alg-design}

\paragraph{Residual-grid observations.}
The $3\D$ markdown combines one grid step of length $2\D$ with the prediction-error allowance $\D$. As a result, a probe indexed by $j$ generates an observation whose conditional mean lies between $S(w_j)$ and $S(w_{j-1})$. The possible mismatch between these two values is exactly the adjacent jump height $\alpha_j$.

\paragraph{Upper confidence search.}
The score $U_{t,j}$ is an upper confidence estimate of the revenue associated with residual interval $[w_j,w_{j+1}]$. Lemma~\ref{lem:optimism} shows that, on the joint good event, the maximum score upper bounds the oracle revenue in every Stage~3 round. The score uses only observations collected from probes that were actually played.

\paragraph{One-step redirect.}
Directly probing index $j$ may incur a discontinuity cost proportional to $\alpha_j$. The algorithm directly probes $j$ only while $b_j>\D$, which limits the number of direct probes at each index. Once $b_j\le\D$, the selected index is executed by probing $j-1$ instead. This redirected probe has at least the demand level needed to validate the selected score, and the incurred price loss is only of order $\D$.

\paragraph{Role of the warmup stage.}
The warmup stage initializes the residual-grid observations and gives every grid point a chance to be sampled before the adaptive stage. It is not the source of the $T^{2/3}$ rate: the Stage~3 confidence analysis depends only on counts of actually played probes and would also work with infinite initial radii and forced first visits. We use $T_w=\lceil T^{2/3}\rceil$ because this cost is of the same order as the target regret and keeps the algorithmic description simple. A shorter initialization or a Stage~3-only initialization rule can be analyzed with the same counting argument.

\section{Regret Analysis}
\label{sec:main}
We state the regret guarantee for Algorithm~\ref{alg:main} and summarize the proof. Full proofs are in Appendix~\ref{sec:app-proofs}.

\begin{theorem}[Cumulative regret]\label{thm:main}
Under Assumptions~\ref{ass:bdd}--\ref{ass:buffer}, Algorithm~\ref{alg:main} satisfies
\[
    R_T\le \widetilde O(T^{2/3}).
\]
The guarantee allows arbitrary bounded-support noise distributions, including distributions whose CDF or survival function has atoms, jumps, and no Lipschitz or smoothness regularity.
\end{theorem}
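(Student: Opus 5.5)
We decompose the expected regret over the three stages and a failure event. Let $\mathcal E:=\mathcal E_\theta\cap\mathcal E_{\rm conf}$, where $\mathcal E_{\rm conf}$ is the uniform concentration event of Lemma~\ref{lem:raw-confidence}. On $\mathcal E_{\rm conf}$, for every played index $j$ and round $t$, $\hat m_j(t)$ lies within $b_j(t)$ of the average $\bar\mu_j(t)$ of the conditional purchase probabilities of past probes of $j$; by \eqref{eq:probe-sandwich-main}, $\bar\mu_j(t)\in[S(w_j),S(w_{j-1})]$.

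\textbf{Failure event.} We show $\Pp(\mathcal E^c)\le O(1/T)$, so its contribution to $R_T$ is at most $BT\cdot O(1/T)=O(1)$. For $\mathcal E_\theta$ we use the Stage~1 signal. Since $p_t\sim\mathrm{Unif}[0,B]$ and $y_t\in[0,B]$, we have $\E[z_t\mid x_t]=\E[y_t\mid x_t]=u_t$. Matrix Bernstein applied to $\sum_t x_tx_t^\top$ gives $\lambda_{\min}\ge T_1\lambda_0/2$, and vector Bernstein applied to $\sum_t x_t(z_t-u_t)$ then gives $\|\hth-\thst\|_2\lesssim \frac{BB_x}{\lambda_0}\sqrt{d\log T/T_1}$. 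Hence $\max_t|\hat u_t-u_t|\le\D$ for a suitable $C_\theta$. The event $\mathcal E_{\rm conf}$ follows from Azuma--Hoeffding on the martingale differences $o_t-\E[o_t\mid\cdot]$ for each index and count, with a union bound over $M\le T$ indices and $T$ counts.

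\textbf{Stages 1--2.} These cost at most $B(T_1+T_w)=O(T^{2/3})$. Note that $\D\asymp T^{-1/3}\sqrt{\log T}$ and $M=\widetilde O(T^{1/3})$.

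\textbf{Stage 3, optimism.} On $\mathcal E$ we use Lemma~\ref{lem:optimism}: $V_t^\star\le\max_jU_{t,j}=U_{t,j_t}$. To recall why it holds, take the optimal residual $w^\star\in[w_j,w_{j+1})$. Then $V_t(w^\star)\le(u_t+w_{j+1})S(w_j)\le(\hat u_t+w_{j+1}+\D)S(w_j)$. Moreover $S(w_j)\le\bar\mu_j\le\hat m_j+b_j$, and $S(w_j)\le1$.

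\textbf{Stage 3, per-round regret.} The regret in round $t$ is then at most $U_{t,j_t}-\pi(x_t,p_t)$, and we split into two cases.

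\emph{Direct probe} ($b_{j_t}>\D$, $a_t=j_t=j$). The posted price is $\hat u_t+w_{j+1}-3\D$, which exceeds the score's multiplier by $4\D$. The realized demand is at least $S(w_j)$, while the score's demand factor is at most $\hat m_j+b_j\le \bar\mu_j+2b_j\le S(w_{j-1})+2b_j=S(w_j)+\alpha_j+2b_j$. The instantaneous regret is therefore $O(\D+b_j(t)+\alpha_j)$. Because direct probes of $j$ happen only while $b_j>\D$, the count satisfies $n_j\le C_{\rm ucb}^2\log T/\D^2$. Summing over these rounds, the $b_j$ terms total $\sum_j O(\sqrt{n_j\log T})=O(M\log T/\D)$, and the $\alpha_j$ terms total $\sum_j\alpha_j\cdot\log T/\D^2\le\log T/\D^2$. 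Both are $\widetilde O(T^{2/3})$, and the $\D$ terms contribute $T\D=\widetilde O(T^{2/3})$.

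\emph{Redirect} ($b_{j_t}\le\D$, $a_t=j-1$). The probe residual lies in $[w_{j-2},w_{j-1}]$, so the realized demand is at least $S(w_{j-1})$. From above, $\hat m_j+b_j\le S(w_{j-1})+2b_j\le S(w_{j-1})+2\D$. The price is $\hat u_t+w_j-3\D$, which is $6\D$ below the score multiplier. The per-round regret is thus $O(\D)$, with no $\alpha$ term, and these rounds sum to $O(T\D)$. The boundary case $j_t=0$ is handled the same way using $S(w_{-1})=1$.

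\textbf{Main obstacle.} The step I expect to be delicate is the direct-probe accounting. One must check that the empirical mean of probe $j$ is compared against $\bar\mu_j$ rather than against a fixed $S$-value, since the residuals drift with $x_t$. The sandwich \eqref{eq:probe-sandwich-main} holds for every realized probe, so averaging preserves it. One must also ensure that the $\alpha_j$ cost is charged at most $O(\log T/\D^2)$ times per index. Adding everything gives $R_T=\widetilde O(T^{2/3})$.
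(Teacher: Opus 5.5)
Your proposal is correct and follows the paper's proof essentially step for step: the same good event $\mathcal E_\theta\cap\mathcal E_{\rm conf}$, optimism through the grid cell containing the optimal residual, the direct-probe/redirect split with each jump cost $\alpha_j$ charged at most $O(\log T/\Delta^2)$ times per index, and an $O(\Delta)$ per-round redirect cost. The one harmless deviation is in the direct-probe confidence terms: you bound them through the per-index cap on direct probes, giving $O(M\log T/\Delta)$, whereas the paper uses Cauchy--Schwarz to get $\widetilde O(\sqrt{MT})$; both are $\widetilde O(T^{2/3})$. Also note a wording slip: it is the score multiplier that exceeds the posted price by $4\Delta$, not the reverse, and your subsequent arithmetic already uses the correct direction.
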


The constants hidden in $\widetilde O(\cdot)$ may depends polynomially on $(d,B_x,B_\theta,B,c,1/\lambda_0,1/\kappa)$. Please kindly find these detailed dependence in Appendix~\ref{app:main-proof}.

\paragraph{Proof architecture.}
The proof has four ingredients. First, uniformly randomized prices in Stage~1 produce $\hth$ with $\max_t|\hat u_t-u_t|\le\D=\widetilde O(T^{-1/3})$ with high probability. Second, conservative markdown converts this prediction guarantee into the deterministic sandwich $S(w_j)\le m_{t,j}\le S(w_{j-1})$ whenever probe index $j$ is played. Third, empirical confidence bounds on adaptively collected probe means imply optimism: the largest score $\max_j U_{t,j}$ upper bounds the oracle revenue in each Stage~3 round. Finally, direct probes and redirected probes are charged separately. Direct probes may pay the adjacent jump cost $\alpha_j$, but each index is directly probed only until its confidence radius reaches the grid scale; redirected probes avoid this jump cost and lose only $O(\D)$ in price.

\paragraph{Key lemmas.}
The first lemma gives the prediction guarantee for the linear component.

\begin{lemma}[Parameter estimation]\label{lem:stage1}
Under Assumptions~\ref{ass:bdd}--\ref{ass:iid}, if $T_1=\lceil T^{2/3}\rceil$ and $\D$ is chosen as in Algorithm~\ref{alg:main}, then with probability at least $1-T^{-2}$, we have $\max_{1\le t\le T}|\hat u_t-u_t|\le \D.$
\end{lemma}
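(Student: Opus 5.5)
The plan is the standard unbiased-signal-plus-least-squares argument. The first step is to show that $z_t=Bo_t$ is an unbiased linear signal. Since $p_t\sim\mathrm{Unif}[0,B]$ is independent of $(x_t,\xi_t)$ and $0\le y_t\le B$ almost surely,
\[
\E[z_t\mid x_t,\xi_t]=B\,\Pp(p_t\le y_t\mid x_t,\xi_t)=y_t .
\]
Hence $\E[z_t\mid x_t]=\langle x_t,\thst\rangle+\E[\xi_t\mid x_t]=u_t$, using independence of the noise and contexts together with $\E[\xi_t]=0$. Write $z_t=\langle x_t,\thst\rangle+\eta_t$, where $\eta_t:=z_t-u_t$ satisfies $|\eta_t|\le B+B_xB_\theta$ (in fact $\le B$, since $z_t\in[0,B]$ and $u_t\in[0,B]$ under the buffer assumption). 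Conditionally on the contexts, the pairs $(p_t,\xi_t)$ are i.i.d., so $\{\eta_t\}$ are independent, bounded, zero-mean given $\{x_t\}_{t\le T_1}$. Consequently
\[
\hth-\thst=\Sigma^{-1}\sum_{t\le T_1}\eta_t x_t,\qquad \Sigma:=\sum_{t\le T_1}x_tx_t^\top,
\]
on the event that $\Sigma$ is invertible.

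The second step controls $\Sigma$ by matrix concentration. Apply matrix Chernoff (or matrix Bernstein) to the i.i.d.\ PSD summands $x_tx_t^\top$ with $\|x_tx_t^\top\|\le B_x^2$. For $T_1\ge C(B_x^2/\lambda_0)\log(dT)$, which holds for large $T$ since $T_1=\lceil T^{2/3}\rceil$, this gives $\lambda_{\min}(\Sigma)\ge T_1\lambda_0/2$ with probability at least $1-T^{-2}/2$. On this event the pseudoinverse coincides with the inverse.

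The third step bounds the noise term $\sum_t\eta_tx_t$. Conditionally on the contexts, each coordinate $\sum_t\eta_t x_{t,k}$ is a sum of independent bounded zero-mean variables. By Hoeffding's inequality and a union bound over the $d$ coordinates, $\|\sum_t\eta_tx_t\|_2\le C'BB_x\sqrt{dT_1\log T}$ with conditional probability at least $1-T^{-2}/2$. A vector Bernstein or Azuma inequality in $\R^d$ would work equally well. On the intersection of the two events,
\[
\|\hth-\thst\|_2\le \frac{2C'BB_x\sqrt{dT_1\log T}}{\lambda_0T_1},
\]
so for every $t\le T$, by Cauchy--Schwarz,
\[
|\hat u_t-u_t|\le B_x\|\hth-\thst\|_2\le 2C'\frac{BB_x^2}{\lambda_0}\sqrt{\frac{d\log T}{T_1}}\le\D,
\]
provided $C_\theta\ge2C'$. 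This bound holds uniformly over all $t$, including the future rounds $t>T_1$, deterministically given $\hth$, because $\|x_t\|_2\le B_x$ almost surely. No union bound over $t$ is therefore needed.

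The main obstacle is keeping the constants and the probability budget consistent with the prescribed form of $\D$. Specifically, the $\log(d)$ factors arising from matrix Chernoff and the coordinate union bound must be absorbed into $C_\theta\sqrt{d\log T}$, and $T$ must be large enough that the covariance event requires only $T_1\gtrsim (B_x^2/\lambda_0)\log(dT)$. I would handle both by taking $C_\theta$ a sufficiently large absolute constant and invoking the convention, already stated in Assumption~\ref{ass:buffer}, that small horizons are absorbed into problem-dependent constants.
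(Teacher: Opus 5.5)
Your proposal is correct and follows essentially the same route as the paper's proof: the unbiased signal $z_t=Bo_t$, a lower bound on the empirical covariance via matrix concentration, a coordinatewise Hoeffding bound on $\sum_t\eta_tx_t$, and Cauchy--Schwarz with $\|x_t\|_2\le B_x$ to cover all rounds $t\le T$ at once. The differences are cosmetic. You use matrix Chernoff, which gives the milder requirement $T_1\gtrsim (B_x^2/\lambda_0)\log(dT)$ instead of the paper's $(B_x^4/\lambda_0^2)d\log T$ from matrix Bernstein, and you make the conditioning on the contexts explicit where the paper leaves it implicit.
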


\begin{proof}[Proof sketch]
For $p_t\sim\mathrm{Unif}[0,B]$ and $y_t\in[0,B]$, the signal $z_t:=Bo_t$ satisfies $\E[z_t\mid x_t]=\E[y_t\mid x_t]=\langle x_t,\thst\rangle$, using exogeneity and $\E[\xi_t]=0$. Standard bounded-design least-squares concentration, together with the empirical covariance lower bound implied by Assumption~\ref{ass:iid}, gives $\|\hth-\thst\|_2\lesssim (B B_x/\lambda_0)\sqrt{d\log(T)/T_1}$ with high probability. Multiplying by $B_x$ and using the definition of $\D$ yields the claim. See Appendix~\ref{app:stage1-proof} for a detailed proof.
\end{proof}

\begin{lemma}[Markdown sandwich]\label{lem:markdown-sandwich}
On $\mathcal E_\theta$, for every Stage~2--3 round in which probe index $j$ is played, its conditional purchase probability $m_{t,j}:=\E[o_t\mid \Hist_{t-1},x_t,\text{probe }j]$ satisfies
\[
    S(w_j)\le m_{t,j}\le S(w_{j-1}).
\]
\end{lemma}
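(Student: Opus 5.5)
The plan is to compute the effective residual of the probe price, show it lies in $[w_{j-1},w_j]$, and then use monotonicity of $S$ together with conditional independence of the fresh noise. The argument has three steps.

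\textbf{Step 1: no clipping.} Fix a Stage~2--3 round $t$ with probe index $j\in\mathcal I$, and work on $\mathcal E_\theta$, so $|\hat u_t-u_t|\le\D$. The unclipped price is $q:=\hat u_t+w_{j+1}-3\D$. Since $w_{j+1}-3\D=w_{j-1}+\D$ and $w_{j-1}\ge w_{-1}=-c-2\D$, we get
\[
q\ge u_t-\D+w_{j-1}+\D\ge u_t-c-2\D\ge \kappa-2\D\ge 0,
\]
using Assumption~\ref{ass:buffer} and $2\D\le\kappa$. For the upper end, $j\le M-1$ gives $w_j\le -c+2(M-1)\D<c$, because $M<c/\D+1$. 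Hence $q\le u_t+\D+w_j-\D=u_t+w_j\le u_t+c\le B$. So $p_t=q$, and the clip is inactive.

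\textbf{Step 2: residual location.} We have $p_t-u_t=w_{j-1}+\D+(\hat u_t-u_t)\in[w_{j-1},w_{j-1}+2\D]=[w_{j-1},w_j]$. For $j=0$ the interval is $[-c-2\D,-c]$.

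\textbf{Step 3: conditional expectation.} The quantities $\hth$ (a function of Stage~1 data), $x_t$, and the chosen index are all $\sigma(\Hist_{t-1},x_t)$-measurable, or involve independent randomization such as $J_t$. The noise $\xi_t$ is independent of $\Hist_{t-1}$ and $x_t$ by Assumption~\ref{ass:bdd} and the i.i.d.\ structure. Therefore
\[
m_{t,j}=\Pp(\xi_t\ge p_t-u_t\mid \Hist_{t-1},x_t)=S(p_t-u_t).
\]
Since $S$ is non-increasing, $S(w_j)\le S(p_t-u_t)\le S(w_{j-1})$. For $j=0$ the upper bound is the convention $S(w_{-1})=1$, which holds trivially.

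The step that needs the most care is Step~1, specifically the boundary indices. At $j=0$ the probe residual sits below $-c$, so the lower buffer must absorb the extra $2\D$; this is exactly where $2\D\le\kappa$ is used. At $j=M-1$ one needs $w_{M-1}\le c$, which follows from the choice of $M$. Once clipping is ruled out, the rest is just monotonicity of $S$ and the independence of the fresh noise, and neither step uses any continuity of $S$.
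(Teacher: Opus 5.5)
Your proof is correct and follows the same route as the paper: rule out clipping via Assumption~\ref{ass:buffer}, place the effective residual $p_t-u_t=w_{j-1}+\D+(\hat u_t-u_t)$ in $[w_{j-1},w_j]$, and conclude by monotonicity of $S$. Your explicit non-clipping checks at the boundary indices $j=0$ and $j=M-1$, and your independence argument for $m_{t,j}=S(p_t-u_t)$, fill in details that the paper leaves implicit.
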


\begin{proof}[Proof sketch]
The un-clipped probe price is $\hat u_t+w_{j+1}-3\D$. Assumption~\ref{ass:buffer} prevents clipping, and $|\hat u_t-u_t|\le\D$ places the realized residual in $[w_{j-1},w_j]$. Monotonicity of $S$ gives the sandwich. See Appendix~\ref{app:sandwich-proof} for more details.
\end{proof}

\begin{lemma}[Uniform confidence for adaptive probes]\label{lem:raw-confidence}
There exists a sufficiently large constant $C_{\rm ucb}$ such that, with probability at least $1-T^{-2}$, simultaneously for all Stage~3 rounds $t$ and all $j\in\mathcal I$,
\[
    |\hat m_j(t)-\bar m_j(t)|\le \frac12 b_j(t),
    \qquad
    b_j(t):=C_{\rm ucb}\sqrt{\frac{\log T}{\max\{1,n_j(t)\}}}.
\]
On the same event, $S(w_j)\le \hat m_j(t)+b_j(t)\le S(w_{j-1})+2b_j(t)$.
\end{lemma}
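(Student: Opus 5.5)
Here $\bar m_j(t)$ denotes the average of the conditional means $m_{s,j}$ over the $n_j(t)$ rounds $s<t$ (in Stages~2--3) in which probe $j$ was played. With this notation, $\hat m_j(t)-\bar m_j(t)$ is an average of martingale differences. The proof is a Freedman/Azuma argument for adaptively sampled arms. I would use the standard ``reward tape'' (stack-of-rewards) device to handle the random, adaptively chosen counts.

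\textbf{Step 1: martingale structure.} Fix $j$. Let $\tau_{j,k}$ be the round of the $k$-th play of probe $j$; each $\tau_{j,k}$ is a stopping time. Define $D_{j,k}:=o_{\tau_{j,k}}-m_{\tau_{j,k},j}$. Conditional on $\Hist_{\tau_{j,k}-1}$, $x_{\tau_{j,k}}$, and the decision to play $j$, the observation $o_{\tau_{j,k}}$ is Bernoulli. This is because $\xi_{\tau_{j,k}}$ is independent of the past and of the context, and the price is measurable with respect to that information (in Stage~2, $J_t$ is independent auxiliary randomness). Hence $D_{j,k}$ is a martingale difference sequence with respect to the filtration indexed by $k$, and $|D_{j,k}|\le1$.

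\textbf{Step 2: fixed-count concentration plus union bound.} For each fixed $n\in\{1,\dots,T\}$, Azuma--Hoeffding gives
\[
\Pp\Bigl(\Bigl|\tfrac1n\sum_{k\le n}D_{j,k}\Bigr|>\sqrt{2\log(2MT^{3})/n}\Bigr)\le (MT^{3})^{-1},
\]
where the partial sums are set to zero for plays beyond the available count. Take a union bound over $n\le T$ and $j\in\mathcal I$. Since $M=\lceil c/\D\rceil=\widetilde O(T^{1/3})\le T$ for large $T$, the total failure probability is at most $T^{-2}$. On the complement event, for every round $t$ we substitute $n=n_j(t)$. For $n_j(t)\ge1$, this gives $|\hat m_j(t)-\bar m_j(t)|\le\sqrt{8\log T/n_j(t)}$, using $\log(2MT^3)\le 4\log T$, and this is at most $\tfrac12 b_j(t)$ once $C_{\rm ucb}\ge 2\sqrt8$. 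When $n_j(t)=0$, both means are $0$ by convention, so the bound holds trivially. This substitution step is the only subtle point: because the bound holds simultaneously for all $n$, plugging in the random $n_j(t)$ is legitimate. Stating it via the stopping-time indexing avoids the usual pitfall.

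\textbf{Step 3: the sandwich.} On $\mathcal E_\theta$, Lemma~\ref{lem:markdown-sandwich} gives $S(w_j)\le m_{s,j}\le S(w_{j-1})$ for every play, so averaging yields $S(w_j)\le\bar m_j(t)\le S(w_{j-1})$. Then
\[
\hat m_j(t)+b_j(t)\ge \bar m_j(t)+\tfrac12 b_j(t)\ge S(w_j),
\]
and
\[
\hat m_j(t)+b_j(t)\le S(w_{j-1})+\tfrac32 b_j(t)\le S(w_{j-1})+2b_j(t).
\]
When $n_j(t)=0$, we have $b_j=C_{\rm ucb}\sqrt{\log T}\ge1\ge S(w_j)$, so the lower bound still holds, and the upper bound is trivial. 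Note that the sandwich part is stated ``on the same event'' but implicitly also requires $\mathcal E_\theta$; I would make this intersection explicit in the statement, since Lemma~\ref{lem:markdown-sandwich} is needed there.
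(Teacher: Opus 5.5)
Your proposal is correct and follows essentially the same route as the paper. Both index the plays of each probe by stopping times so that the centered observations form a bounded martingale-difference sequence, apply Hoeffding--Azuma for each fixed index and sample size, take a union bound over $j\in\mathcal I$ and counts $m\le T$, and obtain the sandwich from Lemma~\ref{lem:markdown-sandwich}; like you, the paper's proof invokes $\mathcal E_\theta$ at that last step. Your separate treatment of $n_j(t)=0$, where $\bar m_j(t)=0$ by convention and one instead uses $b_j(t)\ge 1\ge S(w_j)$, handles a case the paper's line ``$S(w_j)\le\bar m_j(t)$'' silently skips.
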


\begin{proof}[Proof sketch]
For each queried index, enumerate the stopping times at which that index is actually probed. The centered observations at these times form a bounded martingale-difference sequence under the stopped filtration, even though the probing times are adaptive and may include redirected plays. A Hoeffding--Azuma bound for each index and sample size, followed by a union bound over $j$ and $m\le T$, yields the uniform event. Combining it with Lemma~\ref{lem:markdown-sandwich} gives the two one-sided inequalities. See Appendix~\ref{app:confidence-proof} for a complete proof.
\end{proof}

\begin{lemma}[Optimism of the score]\label{lem:optimism}
On $\mathcal E_\theta\cap\mathcal E_{\rm conf}$, every Stage~3 round $t$ satisfies
\[
    V_t^\star\le \max_{j\in\mathcal I}U_{t,j}.
\]
\end{lemma}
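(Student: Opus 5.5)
Fix a Stage~3 round $t$ and work on $\mathcal E_\theta\cap\mathcal E_{\rm conf}$. Let $w^\star\in[-c,c]$ attain $V_t^\star=(u_t+w^\star)S(w^\star)$. If no maximizer exists because $S$ can jump, take a near-maximizing sequence instead. The plan is to find a grid index $j\in\mathcal I$ whose score dominates $V_t(w^\star)$, by bounding the price factor and the demand factor of $U_{t,j}$ separately.

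\textbf{Locating the cell.} The cells $[w_j,w_{j+1}]$ for $j=0,\ldots,M-1$ cover $[-c,w_M]$, and $w_M=-c+2M\D\ge c$, so they cover $[-c,c]$. Pick the largest $j\in\mathcal I$ with $w_j\le w^\star$. Then $w_j\le w^\star\le w_{j+1}$.

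\textbf{Price factor.} On $\mathcal E_\theta$ we have $u_t\le \hat u_t+\D$. Since $w^\star\le w_{j+1}$, this gives
\[
u_t+w^\star\le \hat u_t+w_{j+1}+\D .
\]
By Assumption~\ref{ass:buffer} and $w^\star\ge -c$, we also have $u_t+w^\star\ge\kappa>0$. So the oracle price is positive and the inequalities can be multiplied.

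\textbf{Demand factor.} $S$ is non-increasing and $w_j\le w^\star$, so $S(w^\star)\le S(w_j)$. Lemma~\ref{lem:raw-confidence} gives $S(w_j)\le \hat m_j(t)+b_j(t)$, and $S(w_j)\le1$. Hence
\[
S(w^\star)\le\min\{1,\hat m_j(t)+b_j(t)\}.
\]
This covers the case $n_j(t)=0$ as well, because then $b_j(t)\ge1$ for $C_{\rm ucb}\ge1$, so the min equals $1$.

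\textbf{Combining.} All quantities are nonnegative, so multiplying the two bounds gives $V_t^\star\le U_{t,j}\le\max_{i}U_{t,i}$.

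\textbf{Main subtlety.} The difficulty is not Lemma~\ref{lem:optimism} itself but making sure the confidence lower bound $S(w_j)\le\hat m_j+b_j$ holds uniformly, including unvisited indices. That is handled by Lemma~\ref{lem:raw-confidence}, the sandwich lemma, and the convention for $n_j=0$.

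The sup-versus-max issue at jumps also needs care. If $V_t^\star$ is only a supremum, apply the argument to each $w$ and note that the bound $(\hat u_t+w_{j+1}+\D)\min\{1,\hat m_j+b_j\}$ holds pointwise for all $w\in[w_j,w_{j+1}]$. The inequality then survives taking the supremum over $w$.
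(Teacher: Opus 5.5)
Your proof is correct and follows the paper's argument essentially step for step. You locate the grid cell $[w_j,w_{j+1}]$ containing the optimal residual, bound the price factor using $u_t\le\hat u_t+\D$ and $w^\star\le w_{j+1}$, bound the demand factor using monotonicity of $S$ and the confidence lower bound of Lemma~\ref{lem:raw-confidence}, and multiply. Your extra care is sound and goes slightly beyond the paper's write-up. You check nonnegativity explicitly, and you handle the $n_j(t)=0$ case, where the paper's convention $\bar m_j=0$ does not by itself give $S(w_j)\le\hat m_j+b_j$. Two small caveats: the claim $b_j(t)\ge1$ also needs $\log T\ge1$, and the sup-versus-max worry is moot. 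Indeed $S(w)=\Pp(\xi\ge w)$ is left-continuous and non-increasing, so $V_t$ is upper semicontinuous and attains its maximum on $[-c,c]$.
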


\begin{proof}[Proof sketch]
Let $w_t^\star$ be an optimal residual and choose $q$ such that $w_t^\star\in[w_q,w_{q+1}]$. Such a $q\le M-1$ exists because $w_0=-c$ and $w_M\ge c$; if $w_t^\star=c$, we take $q=M-1$. On $\mathcal E_\theta$, the price multiplier in $U_{t,q}$ dominates $u_t+w_t^\star$, while monotonicity gives $S(w_q)\ge S(w_t^\star)$. Lemma~\ref{lem:raw-confidence} gives $\hat m_q+b_q\ge S(w_q)$, so the score of $q$ upper bounds $V_t^\star$. By Assumption~\ref{ass:buffer} and \(2\Delta\le\kappa\), all candidate price
levels appearing in the optimistic scores are nonnegative. See Appendix~\ref{app:optimism-proof}.
\end{proof}

\begin{lemma}[One-round regret]\label{lem:one-round}
On $\mathcal E_\theta\cap\mathcal E_{\rm conf}$, let $j_t$ be the score maximizer in Stage~3.
\begin{enumerate}[label=(\roman*),topsep=2pt,itemsep=2pt]
\item If Algorithm~\ref{alg:main} is in direct-probe mode and plays $j_t$, then
\[
    V_t^\star-\pi(x_t,p_t)
    \le
    C\D+B\alpha_{j_t}+CB b_{j_t}(t).
\]
\item If Algorithm~\ref{alg:main} is in redirect mode, then
\[
    V_t^\star-\pi(x_t,p_t)
    \le
    C_B\D,
\]
where $C_B$ depends only on $B$.
\end{enumerate}
\end{lemma}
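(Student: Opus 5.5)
The plan is to chain Lemma~\ref{lem:optimism} with the two-sided confidence bound of Lemma~\ref{lem:raw-confidence} and the markdown sandwich of Lemma~\ref{lem:markdown-sandwich}. This lets us compare the realized revenue $\pi(x_t,p_t)=p_t\,m_{t,a_t}$ with the maximal score $U_{t,j_t}$. On $\mathcal E_\theta$, no clipping occurs. The posted price for action $a$ is therefore $p_t=\hat u_t+w_{a+1}-3\D$, and the multiplier in $U_{t,j}$ satisfies
\[
\hat u_t+w_{j+1}+\D\le u_t+w_{j+1}+2\D\le B+O(\D),
\]
using $u_t+c\le B$. In both cases the starting point is
\[
V_t^\star\le U_{t,j_t}\le(\hat u_t+w_{j_t+1}+\D)\,\min\{1,\hat m_{j_t}+b_{j_t}\},
\]
and it remains to lower bound $p_t m_{t,a_t}$ by this product minus the claimed error.

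For case (i), $a_t=j_t=:j$ and $p_t=(\hat u_t+w_{j+1}+\D)-4\D$. Lemma~\ref{lem:raw-confidence} gives $\hat m_j+b_j\le S(w_{j-1})+2b_j$, and Lemma~\ref{lem:markdown-sandwich} gives $m_{t,j}\ge S(w_j)=S(w_{j-1})-\alpha_j$. Combining these yields
\[
V_t^\star-\pi(x_t,p_t)\le (p_t+4\D)(S(w_{j-1})+2b_j)-p_t(S(w_{j-1})-\alpha_j).
\]
Using $p_t\le B$ and $S\le 1$, the right-hand side is at most $4\D+B\alpha_j+2Bb_j+8\D b_j$. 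Since $b_j\le C_{\rm ucb}\sqrt{\log T}$ could be large, I would not use that cross term as written. Instead I would bound the min by $1$ in the score, which caps the multiplier times the min at $p_t+4\D$. This gives the form $C\D+B\alpha_j+CBb_j$.

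For case (ii), the subtle point is that the redirected probe $j-1$ has purchase probability $m_{t,j-1}\ge S(w_{j-1})$ by the sandwich. Meanwhile $b_j\le\D$ and Lemma~\ref{lem:raw-confidence} give $\min\{1,\hat m_j+b_j\}\le S(w_{j-1})+2\D$. The jump $\alpha_j$ therefore disappears. The redirected price is $p_t=\hat u_t+w_j-3\D=(\hat u_t+w_{j+1}+\D)-6\D$. Hence
\[
V_t^\star-\pi\le (p_t+6\D)(S(w_{j-1})+2\D)-p_tS(w_{j-1})\le 6\D+2B\D+12\D^2,
\]
which is $C_B\D$ once we use $\D\le 1$ for large $T$.

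The boundary subcase $j_t=0$ needs separate treatment. Here $a_t=0$ is probed directly even though $b_0\le\D$. By the convention $S(w_{-1})=1$ we have $\alpha_0=1-S(w_0)=1-S(-c)=0$, since $\xi_t\ge -c$ almost surely and so $S(-c)=1$. Case (i)'s computation with $\alpha_0=0$ and $b_0\le\D$ then gives an $O(\D)$ bound. I expect the main obstacle to be this bookkeeping: handling the truncation $\min\{1,\cdot\}$ so that large radii do not produce cross terms, and verifying nonnegativity and non-clipping of prices via Assumption~\ref{ass:buffer}, which is where $2\D\le\kappa$ enters.
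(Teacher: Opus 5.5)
Your proposal is correct and follows the paper's proof step for step: optimism, then $\min\{1,x\}\le x$, then the upper confidence bound $\hat m_j+b_j\le S(w_{j-1})+2b_j$, then the sandwich lower bound on the played probe ($S(w_j)$ for a direct probe, $S(w_{j-1})$ for the redirected probe $j-1$), with the price offsets $4\D$ and $6\D$, and $\alpha_0=0$ in the boundary case. Your worry about the cross term $8\D b_j$ in case (i) is unnecessary, since $\D\le1$ gives $8\D b_j\le 8b_j$, which is absorbed into $CBb_j$ exactly as the paper does; if you do want to use the truncation, apply $\min\{1,\cdot\}\le 1$ only to the $4\D$ part of the multiplier, so that $U_{t,j}\le p_t(\hat m_j+b_j)+4\D$ and the bound $4\D+B\alpha_j+2Bb_j$ follows directly.
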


\begin{proof}[Proof sketch]
By optimism, the oracle value is at most the selected score. In direct-probe mode, the selected score can exceed the revenue of the played probe only through the grid-scale price slack, the adjacent jump $\alpha_{j_t}$, and the confidence radius $b_{j_t}$. The truncation $\min\{1,\hat m_j+b_j\}$ can only reduce the score. In redirect mode, the algorithm probes $j_t-1$ when $j_t\ge1$; this action has conditional demand at least $S(w_{j_t-1})$, which covers the selected score's demand component up to the confidence radius. Since redirect mode occurs only when $b_{j_t}\le\D$, the remaining gap is $O(\D)$. The boundary case $j_t=0$ uses $S(w_{-1})=S(w_0)=1$. See Appendix~\ref{app:one-round-proof} for more details.
\end{proof}

\begin{lemma}[Cumulative Stage~3 regret on the good event]\label{lem:stage3-good}
On $\mathcal E_\theta\cap\mathcal E_{\rm conf}$, the cumulative regret in Stage~3 is $\widetilde O(T^{2/3})$.
\end{lemma}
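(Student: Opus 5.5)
We sum the per-round bounds of Lemma~\ref{lem:one-round} over Stage~3 rounds, splitting rounds into direct-probe and redirect rounds. Let $\mathcal T_{\rm dir}$ and $\mathcal T_{\rm red}$ denote these two sets of rounds. On $\mathcal E_\theta\cap\mathcal E_{\rm conf}$, part (ii) of Lemma~\ref{lem:one-round} bounds the redirect contribution by $C_B\D\,|\mathcal T_{\rm red}|\le C_B\D T$. Since $\D\asymp\sqrt{d\log T/T_1}$ with $T_1=\lceil T^{2/3}\rceil$, we have $\D=\widetilde O(T^{-1/3})$. The redirect total is therefore $\widetilde O(T^{2/3})$.

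For direct rounds, part (i) of Lemma~\ref{lem:one-round} gives a total of at most
\[
C\D T+B\sum_{t\in\mathcal T_{\rm dir}}\alpha_{j_t}+CB\sum_{t\in\mathcal T_{\rm dir}}b_{j_t}(t).
\]
The first term is again $\widetilde O(T^{2/3})$. The key counting fact is that index $j$ is directly probed at round $t$ only if $b_j(t)>\D$. Writing out the radius, this condition reads $n_j(t)<C_{\rm ucb}^2\log T/\D^2=:N$. Each direct probe of $j$ increments $n_j$, because the algorithm plays $a_t=j_t$ in that case. Hence the number $D_j$ of direct probes of index $j$ satisfies $D_j\le N+1=\widetilde O(\D^{-2})=\widetilde O(T^{2/3})$ uniformly in $j$. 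The jump term is then
\[
B\sum_j\alpha_jD_j\le B(N+1)\sum_j\alpha_j\le B(N+1),
\]
using $\sum_j\alpha_j\le1$. This bound is $\widetilde O(T^{2/3})$, and it is the crucial place where arbitrary discontinuities are paid only once per confidence level.

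For the confidence-radius term, we sum per index. At the $k$-th direct probe of $j$, the count satisfies $n_j(t)\ge k-1$ (counting Stage~2 and redirected plays only helps). So $b_j(t)\le C_{\rm ucb}\sqrt{\log T/\max\{1,k-1\}}$, and
\[
\sum_{t\in\mathcal T_{\rm dir},\,j_t=j}b_j(t)\lesssim\sqrt{\log T}\,\sqrt{D_j}.
\]
Summing over $j\in\mathcal I$ with $|\mathcal I|=M=\lceil c/\D\rceil$ and $D_j\le N+1$ gives a total of order $M\sqrt{\log T}\,\sqrt N\lesssim (c/\D)\cdot\log T/\D=\widetilde O(\D^{-2})=\widetilde O(T^{2/3})$. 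The same bound also follows from Cauchy--Schwarz, since $\sum_jD_j\le MN$.

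The main obstacle is making the counting rigorous with respect to adaptivity. The counts $n_j$ include warmup and redirected plays, the radius at a direct probe must be indexed by the current count, and the boundary case $j_t=0$ with $b_0\le\D$ is treated as a redirect-type round, which part (ii) covers. I would handle this by fixing $j$ and listing its direct-probe rounds in order. Along that list $n_j$ is strictly increasing, and every listed round has $n_j<N$. Both per-index bounds then follow deterministically on the good event. Adding the redirect, grid, jump, and radius contributions yields $\widetilde O(T^{2/3})$ Stage~3 regret on $\mathcal E_\theta\cap\mathcal E_{\rm conf}$. The hidden factors are polynomial in $c,B,C_{\rm ucb}$ and the constants in $\D$.
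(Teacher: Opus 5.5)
Your proof is correct and follows the paper's argument. You use the same split into redirect rounds (each costing $O(\Delta)$) and direct rounds, and the same cap of $O(\log T/\Delta^2)$ direct probes per index combined with $\sum_j\alpha_j\le 1$ for the jump term. The only difference is in the radius term. You use the per-index cap to get $M\sqrt{N}\asymp \log T/\Delta^2$, whereas the paper applies Cauchy--Schwarz with $\sum_j n_j(T)\le T$ to get $\widetilde O(\sqrt{MT})$. Both are $\widetilde O(T^{2/3})$, and your indexing via $n_j(t)\ge k-1$ handles the $\max\{1,\cdot\}$ in the radius slightly more carefully than the paper's bound.
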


\begin{proof}[Proof sketch]
Redirect rounds contribute at most $O(\D)$ each by Lemma~\ref{lem:one-round}, hence $O(T\D)=\widetilde O(T^{2/3})$. Direct-probe rounds contribute $\D$, $b_{j_t}(t)$, and $\alpha_{j_t}$. The first term is again $O(T\D)$. The confidence terms satisfy the standard counting bound $\sum_{\rm probe}b_{j_t}(t)\le \widetilde O(\sqrt{MT})=\widetilde O(T^{2/3})$ since $M=O(\D^{-1})$. For the jump terms, index $j$ is directly probed only while $b_j>\D$, so it is directly probed at most $O(\log T/\D^2)$ times; therefore $\sum_{\rm probe}\alpha_{j_t}\le O(\log T/\D^2)\sum_j\alpha_j=\widetilde O(T^{2/3})$. See Appendix~\ref{app:cumulative-proof}.
\end{proof}

\paragraph{Proof sketch of Theorem~\ref{thm:main}.}
Stages~1 and~2 each last $O(T^{2/3})$ rounds and have per-round regret at most $B$. Lemma~\ref{lem:stage3-good} controls Stage~3 on the joint good event, whose complement has probability $O(T^{-2})$ and contributes at most $BT\cdot O(T^{-2})=O(1/T)$. Thus $R_T\le\widetilde O(T^{2/3})$. Full details are in Appendix~\ref{app:main-proof}.

\section{Numerical Experiments}
\label{sec:experiments}
\begin{figure}[t]
    \centering
    \includegraphics[width=\textwidth]{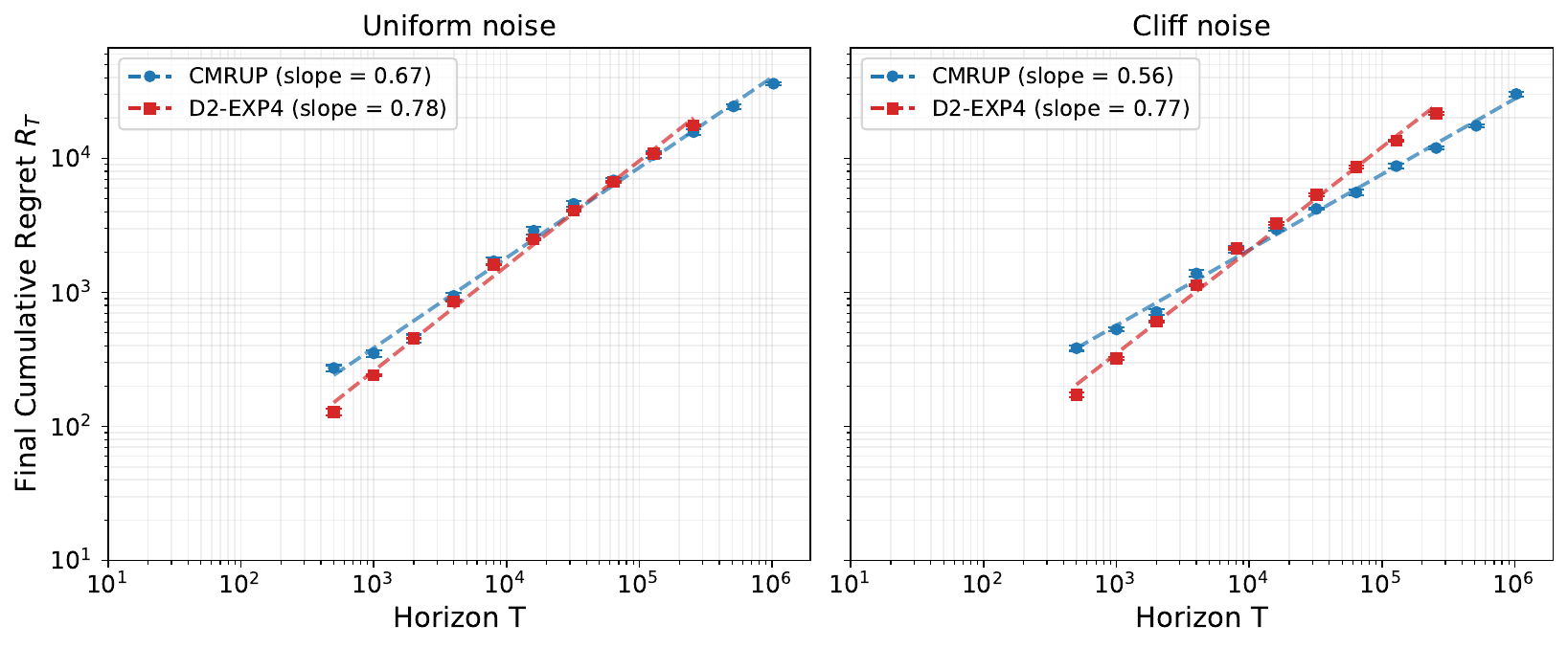}
    \caption{Final cumulative pseudo-regret $R_T$ on synthetic contextual-pricing instances with smooth uniform noise and discontinuous cliff noise. CMRUP is averaged over $10$ independent seeds, while D2-EXP4 is averaged over $5$ seeds due to its substantially higher computational cost. Error bars show standard errors across seeds. Dashed lines are all-point least-squares fits on the log-log scale, using all available horizons for each algorithm.}
    \label{fig:loglog}
\end{figure}

We report synthetic experiments designed to verify the final-regret scaling predicted by Theorem~\ref{thm:main}. The experiments compare CMRUP with the agnostic D2-EXP4 baseline of \citet{xu2022towards} on both a smooth Lipschitz noise distribution and a discontinuous non-Lipschitz distribution containing an atom. The primary quantity of interest is the final cumulative pseudo-regret $R_T$ as a function of the horizon $T$.

\paragraph{Setup.}
We generate i.i.d. contexts $x_t\in\mathbb R^5$ with an intercept coordinate $x_{t,0}=1$ and independent features $x_{t,j}\sim\mathrm{Unif}(0,1)$ for $j=1,\ldots,4$. The parameter is $\theta^\star=(2,0.125,0.125,0.125,0.125)^\top$, so that $u_t=\langle x_t,\theta^\star\rangle\in(2,2.5)$. We set the noise support to $[-1,1]$ and consider two noise distributions: uniform noise, $\xi_t\sim\mathrm{Unif}[-1,1]$, whose survival function is Lipschitz; and cliff noise, where $\xi_t=0$ with probability $0.3$ and otherwise $\xi_t\sim\mathrm{Unif}[-1,1]$, creating a jump discontinuity of height $0.3$ in the survival function at $w=0$.

For CMRUP, we run horizons $T\in\{500,1000,2000,\ldots,1{,}024{,}000\}$ with $10$ independent seeds per horizon. In the experiments we set $\Delta=\Delta_{\rm mult}\sqrt{d\log(T)/T_1}$ with $\Delta_{\rm mult}=0.35$. This multiplier controls only the residual-grid scale; the conservative markdown coefficient remains fixed at $3$, so Stage-2/3 probes use a $3\Delta$ markdown. For D2-EXP4, we use the same synthetic streams and pseudo-regret metric, running horizons $T\in\{500,1000,2000,\ldots,256{,}000\}$ with $5$ independent seeds per horizon. Since exact enumeration of the D2-EXP4 policy class is computationally prohibitive at these horizons, we use the sampled-policy implementation described in Appendix~\ref{app:exp-details}. All experiments use CPU-only local computations.

\paragraph{Final-regret scaling.}
Figure~\ref{fig:loglog} plots $R_T$ against $T$ on log-log axes. For each algorithm and noise distribution, we fit a power law $R_T\approx C T^\alpha$ by ordinary least squares in log-log space using all available horizons for that algorithm.

On the \textbf{uniform noise} instance, CMRUP has fitted exponent $0.67$, while D2-EXP4 has fitted exponent $0.78$. Although the two algorithms have comparable regret at small horizons, D2-EXP4 exhibits a visibly steeper growth rate, whereas CMRUP tracks the predicted $T^{2/3}$ scaling. On the \textbf{cliff noise} instance, the separation is more pronounced: CMRUP has fitted exponent $0.56$, while D2-EXP4 has fitted exponent $0.77$. This is consistent with the role of the conservative markdown and redirect rule, which prevent the atom-induced jump from causing a worse asymptotic scaling.

\paragraph{Scope.}
These simulations are rate-verification experiments rather than a comprehensive empirical benchmark. The D2-EXP4 baseline is a sampled-policy implementation of the prior agnostic reduction, included to illustrate the empirical scaling difference under the same synthetic environment; it is not an exhaustive implementation of the exponentially large policy class. We do not include ablations of the markdown coefficient or the redirect rule. Such ablations would further isolate the algorithmic mechanism, but the present experiments focus on the main rate comparison under smooth and discontinuous demand.

\section{Discussion and Conclusion}
\label{sec:conclusion}
We close with limitations, ethical considerations, and a summary of the main result. Additional related discussions appear in Appendix~\ref{sec:app_more_discussion}.

\paragraph{Limitations and extensions.}
Our theorem is stated for bounded-support noise, i.i.d. well-conditioned contexts, and an interior price range. These conditions are used for finite-grid learning, Stage-1 parameter estimation, and avoiding boundary clipping in the markdown probes, respectively; they are not continuity assumptions on the demand curve. Extensions to light-tailed noise, horizon-free operation, boundary-clipped probes and richer context processes are discussed in Appendix~\ref{sec:app_more_discussion}.

\paragraph{Computation.}
The direct implementation is polynomial time: Stage~1 solves a $d$-dimensional least-squares problem, and Stage~3 scans $M=\widetilde O(T^{1/3})$ residual-grid points per round. This gives total time $\widetilde O(T^{4/3}+T^{2/3}d^2+d^3)$ and memory $\widetilde O(T^{1/3}+d^2)$. Faster score maximization may improve constants in large-scale deployments, but is orthogonal to the regret analysis.

\paragraph{Ethical considerations.}
This paper studies a theoretical online pricing model. The algorithm prices one item or service instance at a time and does not simultaneously offer the same identical good to different buyers at different prices. Nevertheless, contextual pricing can create disparate-impact concerns if contexts contain protected attributes or proxies. Responsible deployment should audit features, monitor realized prices and acceptance rates across groups, and enforce policy or fairness constraints when appropriate. The proposed method is relatively transparent---a linear predictor, a residual grid, and explicit confidence bounds---which can facilitate such audits.

\paragraph{Conclusion.}
We presented a polynomial-time algorithm for contextual dynamic pricing with linear valuations and fully agnostic non-Lipschitz noise. The algorithm combines randomized estimation of the linear component, conservative markdown probes on a residual grid, and an optimistic redirect rule that controls discontinuity-induced bias without assuming smoothness of the demand curve. It achieves $\widetilde O(T^{2/3})$ expected regret, matching known lower bounds up to logarithmic factors. The result shows that atoms and arbitrary jumps in the noise distribution do not preclude optimal regret in contextual pricing, provided the algorithm uses a pricing rule that respects the one-sided structure of censored feedback.

\newpage
\bibliographystyle{plainnat}
\bibliography{ref}

\newpage
\appendix
\section{More Related Works}
\label{sec:app_more_related_works}

This appendix complements Section~\ref{sec:related} with additional background on dynamic pricing, bandit methods, and fairness-aware pricing.

\paragraph{Semiparametric and distribution-free contextual pricing.}
Several papers study contextual pricing with unknown residual distributions under different structural assumptions. \citet{shah2019semi} consider semiparametric dynamic contextual pricing and jointly learn regression parameters and residual distributions. \citet{luo2021distribution} study contextual dynamic pricing with known $\thst$ and unknown market noise, while \citet{liu2021optimal} and \citet{mao2018contextual} analyze contextual pricing/search models under different assumptions on buyers and feedback. These works are conceptually related because they reduce aspects of pricing to learning residual or threshold structure from censored feedback. Our setting focuses specifically on the linear-valuation model with bounded but otherwise arbitrary noise, and the main challenge is to obtain the minimax $T^{2/3}$ rate without continuity of the residual demand curve.

\paragraph{Classical non-contextual dynamic pricing.}
The non-contextual version of the problem has a single unknown demand curve and no covariates. \citet{kleinberg2003value} established the classical $\Omega(T^{2/3})$ lower bound for online posted-price auctions. \citet{besbes2009dynamic} study dynamic pricing without knowing the demand function and develop near-optimal policies in nonparametric settings. This literature motivates the $T^{2/3}$ benchmark that also appears in contextual pricing. Contextual pricing is harder because the oracle price shifts with $x_t$ and the learner observes only binary feedback, so the linear component and the residual demand curve must be learned simultaneously.

\paragraph{Parametric and high-dimensional pricing.}
A separate line of work studies dynamic pricing under parametric demand models or high-dimensional contextual structure. Examples include settings with noise-free \citep{leme2018contextual}, small noise \citep{cohen2020feature_journal}, known noise distributions \citep{javanmard2019dynamic,xu2021logarithmic, javanmard2020multi}, parametric variance \citep{ban2021personalized}, heteroscedasticity \citep{xu2024pricing} and non-stationarity \citep{baby2023non}. These results obtain strong guarantees by imposing parametric structure, sparsity, or known regularity conditions. Our model is nonparametric in the residual noise distribution and allows discontinuities in demand.


\paragraph{Online pricing with demand learning: A broader view.}
A substantial body of work studies dynamic pricing when the demand function must be learned online under various structural assumptions. \citet{besbes2015surprising} show that even misspecified linear demand models can yield near-optimal pricing policies, highlighting the surprising sufficiency of simple parametric models. \citet{keskin2014dynamic} design semi-myopic policies for single-product pricing under unknown parametric demand and prove asymptotic optimality up to logarithmic factors, while \citet{keskin2017chasing} extend the analysis to non-stationary environments and characterize the additional regret cost of tracking a drifting demand curve. On the algorithmic side, \citet{cheung2017dynamic} study pricing under a strict cap on the number of price changes and obtain matching iterated-logarithm regret bounds. \citet{nambiar2019dynamic} address model misspecification directly and propose a random price shock policy with provable robustness guarantees. \citet{bu2020online} characterize the value of offline data for online pricing through a phase-transition phenomenon and an inverse-square law governing how sample size, location, and dispersion affect the optimal regret. \citet{bastani2022meta} take a multi-task perspective and show that transfer learning across related pricing experiments via a shared Thompson-sampling prior can substantially accelerate learning.  \citet{xu2025dynamic} study a pricing problem with censored demand feedback and compressed demand curve, proposing a UCB-based method to achieve optimal regret. \citet{cohen2018dynamic} develop data-driven pricing methods using sample-average and max-min ratio formulations under limited demand information. The joint pricing-and-inventory variant is studied by \citet{javanmard2017perishability}, who introduces a projected stochastic gradient pricing policy for high-dimensional contextual demand with time-varying parameters. Pricing under inventory constraints is also studied by \citet{chen2019coordinating, chen2020data, chenbx2021nonparametric, chen2023optimal, xu2025joint} under a variety of fully- or partially-observed settings. These works complement the present paper by studying the learning-and-earning trade-off from different angles---misspecification robustness, limited experimentation, offline warm-starting, and meta-learning---whereas our focus is on removing regularity assumptions on the noise distribution altogether.

\paragraph{Posted-price mechanisms and bandit-based pricing.}
Contextual search is closely related to dynamic pricing: both involve learning an unknown threshold from binary feedback.  \citet{lobel2018multidimensional} introduce the Projected Volume algorithm for $d$-dimensional contextual binary search and obtain $O(d\log(d/\varepsilon))$ regret, with direct applications to feature-based pricing. \citet{leme2018contextual} develop intrinsic-volume potential arguments that yield $O_d(1)$ total loss for symmetric loss and doubly logarithmic regret for the pricing variant of contextual search. Robustness to model misspecification in contextual search is studied by \citet{krishnamurthy2020contextual}, who design corruption-robust algorithms whose regret degrades gracefully with the number of irrational agents, and by \citet{leme2022corruption}, who prove tight regret bounds via density-update methods. On the privacy front, \citet{chen2022privacy} propose differentially private contextual pricing policies with near-optimal regret. \citet{wang2021uncertainty} develop confidence-interval-based uncertainty quantification for demand parameters in contextual pricing. In incentive-aware settings, \citet{golrezaei2019dynamic} design contextual robust pricing policies that remain effective when buyers behave strategically. For non-contextual posted-price auctions, \citet{cesa2019dynamic} study revenue maximization when valuations are supported on finitely many unknown points and establish tight $\Theta(\sqrt{KT})$ regret, while \citet{cesa2014regret} give efficient $\widetilde{O}(\sqrt{T})$-regret algorithms for reserve-price optimization in second-price auctions. The closely related bilateral-trade setting is studied by \citet{cesa2024bilateral}, who provide a complete characterization of regret rates across different feedback and valuation models. These works collectively show that the interplay between censored feedback, context dimensionality, strategic behavior, and distributional assumptions governs the statistical complexity of online pricing; our contribution focuses specifically on removing distributional regularity in the residual demand curve while retaining the optimal $\widetilde{O}(T^{2/3})$ rate.

\paragraph{Contextual search and threshold feedback.}
Contextual search and pricing with threshold feedback are closely related to dynamic pricing because the learner observes binary information about an unknown threshold. \citet{mao2018contextual} study contextual pricing for Lipschitz buyers, \citet{liu2021optimal} analyze optimal contextual pricing and extensions, and \citet{leme2022corruption} study model misspecification in contextual search and prove a tight bound. These works differ in model details and regularity assumptions as their non-continuity assumptions are mainly made on the observations (the ``demand'') instead of distributions (the ``demand curve''). That said, they share the challenge of learning from one-bit threshold observations.

\paragraph{Feature-based and policy-class approaches.}
Feature-based pricing algorithms often reduce the problem to learning over a class of policies. This approach is flexible because it can handle weak assumptions on the noise distribution, but it may require a large policy class. \citet{xu2022towards} use this perspective for agnostic linear-valuation pricing and obtain $\widetilde O(T^{3/4})$ regret via a discretized policy class and an adversarial bandit algorithm. Our algorithm instead exploits the additive linear-valuation structure more directly: once the linear component is estimated, the remaining uncertainty is one-dimensional in the residual coordinate.

\paragraph{Semiparametric residual learning.}
\citet{shah2019semi} study semiparametric dynamic contextual pricing, where both regression parameters and the residual distribution must be learned from censored feedback. \citet{fan2024policy} develop a uniformly randomized pricing estimator that yields a semiparametric route to learning the linear component; we use the same identity in Stage~1. The novelty of our analysis lies in how the residual demand curve is handled after parameter learning: conservative markdown produces one-sided grid observations, and redirect-UCB controls discontinuities without imposing smoothness.

\paragraph{Shape constraints and discontinuous demand.} The survival function $S(w)=\mathbb P(\xi\ge w)$ is monotone even when it is discontinuous. Shape-constrained ideas are therefore natural for agnostic pricing. \citet{bracale2025dynamic} use isotonic regression in the linear-valuation model under H\"older-type regularity assumptions, while \citet{gong2026minimax} study contextual dynamic pricing with general valuation models under differentiability conditions on the demand curve. Our analysis uses only monotonicity and bounded support. The markdown sandwich ensures that binary observations can be interpreted as one-sided probes of adjacent survival levels, and the confidence-bound analysis does not require continuity or differentiability of $S$.

\paragraph{Confidence-bound methods.}
Upper-confidence-bound algorithms are a standard tool for balancing exploration and exploitation in multi-armed bandits \citep{auer2002finite,auer2002using}. Our Stage~3 analysis uses the same counting principle: if an arm with confidence radius $b_j$ is selected repeatedly, its count increases and the radius shrinks, and the cumulative confidence cost is bounded by $\widetilde O(\sqrt{MT})$ over $M$ grid indices. The pricing-specific difficulty is that probing a residual index near a discontinuity may introduce a one-sided jump bias. The redirect rule separates this jump cost from the confidence cost and bounds it using the monotonicity identity $\sum_j\alpha_j\le1$. A mix use of upper-confidence-bound and lower-confidence-bound methods is seen in \citet{xu2025online} where they study an online resource allocation problem under uncertainties.

\paragraph{Fairness-aware pricing.}
Personalized and contextual pricing can raise fairness concerns, especially when contexts contain sensitive information or proxies. A growing literature studies dynamic pricing under fairness constraints or fairness regulation \citep{cohen2021dynamic,cohen2022price,chen2021fairness,yang2022fairness, xu2023doubly}. Our regret analysis does not impose fairness constraints, but the algorithmic structure is compatible with them: one can restrict the admissible price set, constrain the context features, or post-process the selected price to satisfy externally specified fairness rules. Appendix~\ref{sec:app_more_discussion} discusses ethical considerations in more detail.

\section{More Discussions}
\label{sec:app_more_discussion}

This appendix expands on the limitations and implementation considerations discussed in Section~\ref{sec:conclusion}.

\paragraph{Overall: Limitations and extensions.}
The analysis assumes i.i.d. contexts with a well-conditioned covariance matrix. This condition is used to learn the linear parameter in Stage~1. Extending the result to richer context processes is an important direction. For example, with obliviously chosen contexts satisfying an empirical full-rank condition, one may still expect the randomized-price estimator to recover $\theta^\star$; handling fully adaptive or adversarial contexts would require a different parameter-learning component.

We also assume bounded noise and a price buffer ensuring that the markdown probes remain in the admissible price interval. Bounded support is standard in worst-case regret analyses for posted-price learning and can often be approximated by truncation under sub-Gaussian or sub-exponential tails. The buffer condition is a boundary condition for the price interval rather than a smoothness condition on demand. Removing it would require explicitly treating clipped markdown probes; this appears technically feasible but would add boundary cases to the sandwich argument.

\paragraph{Computational complexity.}
The algorithm is polynomial time. Stage~1 requires forming and inverting a $d\times d$ covariance matrix, with cost $O(T_1d^2+d^3)$ using a direct implementation. Stage~2 performs $T_w=\widetilde O(T^{2/3})$ constant-time grid updates. Stage~3 scans $M=\widetilde O(T^{1/3})$ grid indices per round to compute the optimistic score, giving total time $\widetilde O(TM)=\widetilde O(T^{4/3})$, plus $O(1)$ empirical-mean updates per round. The overall direct implementation cost is therefore
\[
    \widetilde O(T^{4/3}+T^{2/3}d^2+d^3),
\]
with memory $\widetilde O(T^{1/3}+d^2)$. This is polynomial in both $T$ and $d$ and avoids enumerating a high-dimensional policy class. For very large-scale deployments, faster score maximization would be useful; this computational optimization is orthogonal to the regret analysis.

\paragraph{Bounded support and truncation.}
The bounded-support assumption on $\xi_t$ allows the algorithm to work on a finite residual grid. If the noise has light tails, a truncated version of the algorithm can be applied on an interval whose length grows logarithmically with $T$. The regret contribution from tail events can then be controlled by the tail probability, while the grid size and markdown scale acquire only logarithmic factors. A complete treatment would require replacing the bounded valuation assumption in Stage~1 by a corresponding concentration condition for the randomized-price regression signal.

\paragraph{Price-buffer and clipping.}
The price-buffer assumption ensures that the conservative markdown probes are not clipped in Stages~2--3 on the Stage~1 good event. This keeps the sandwich proof simple: the effective residual of probe $j$ lies exactly in $[w_{j-1},w_j]$. The condition is not a regularity assumption on $F$ or $S$; it only keeps the probed prices away from the boundary of the admissible price interval. For the asymptotic theorem, it is enough that the buffer dominates the grid scale, i.e., $\kappa\gtrsim\Delta$. A fixed positive buffer is a clean sufficient condition.

Boundary clipping can be handled by explicit conventions. If a lower-clipped price is zero and valuations are nonnegative, purchase occurs with probability one, which is consistent with the lower boundary survival value. Upper clipping is ruled out by the bounded valuation model. One can alternatively discard lower-clipped observations from the residual-grid dataset and charge the clipped rounds directly as boundary rounds. We do not include this extension in the main theorem in order to keep the presentation focused on the core discontinuity issue rather than boundary bookkeeping.

\paragraph{Horizon dependence.}
The grid scale $\Delta$ and the lengths of Stages~1--2 depend on the horizon $T$. This is common in regret-optimal pricing algorithms. A horizon-free version can be obtained by a doubling trick: run the algorithm in episodes of lengths $1,2,4,\ldots$ and restart the grid at the beginning of each episode. If the regret in an episode of length $2^r$ is at most $C(2^r)^{2/3}\operatorname{polylog}(2^r)$, then over all completed episodes up to horizon $T$,
\[
    \sum_{r\le \log_2 T} C(2^r)^{2/3}\operatorname{polylog}(2^r)
    \le
    C' T^{2/3}\operatorname{polylog}(T).
\]
Thus the doubling trick preserves the $\widetilde O(T^{2/3})$ rate.

\paragraph{Context assumptions.}
The i.i.d. context assumption is used to obtain a high-probability least-squares estimate of $\theta^\star$ from uniformly randomized prices. If the learner is given an exogenous exploration design with a well-conditioned empirical covariance matrix, the same proof can be adapted. Removing both stochasticity and empirical full-rank conditions is more difficult, because the binary feedback may not contain enough information to identify the linear component in directions that are rarely explored by the contexts.

\paragraph{Choice of the price range in Stage~1.}
The randomized-price regression identity in Stage~1 uses prices uniformly distributed on an interval containing the valuation support. The variance of the regression signal $z_t=B o_t$ scales with $B^2$, so a tighter valid price range can improve finite-sample performance. The theory uses a known upper bound $B$ for simplicity. In practice, one may use the smallest reliable admissible range, or combine the method with a preliminary range-estimation step when conservative bounds are available.

\paragraph{Grid size and dependence on the support radius.}
The confidence cost in Stage~3 scales as $\widetilde O(\sqrt{MT})$, where $M=O(c/\Delta)$. Thus the support radius $c$ enters the problem-dependent constants. This dependence is natural for a uniform residual grid over the entire bounded support. Adaptive grid refinement or focusing on promising residual regions may improve constants in benign instances, but such refinements would need additional bookkeeping to preserve optimism under discontinuities.

\paragraph{Ethical considerations.}
The algorithm prices one item or service instance at a time. It does not simultaneously offer the same identical good to different buyers at different prices, and therefore does not instantiate simultaneous same-good price discrimination. The purpose of the context is to model heterogeneity across sequential pricing instances, such as differentiated products, changing service conditions, or buyer-product pairs.

At the same time, contextual pricing systems should be deployed with explicit governance. If the context vector includes protected attributes or strong proxies, any revenue-maximizing algorithm may produce undesirable disparities over time. Practical deployments should therefore audit the feature set, monitor realized prices and acceptance rates across relevant groups, and enforce policy constraints when required. The transparency of the proposed algorithm---a linear predictor, an explicit residual grid, and confidence-bound decisions---can support such audits more readily than opaque black-box pricing models.

\section{Proofs for the Redirect-UCB Analysis}
\label{sec:app-proofs}

This appendix gives the full proof of Theorem~\ref{thm:main}. We use the notation of Sections~\ref{sec:setup}--\ref{sec:alg}. Throughout, $C,C',C_B$ denote positive problem-dependent numerical constants whose values may change from line to line but do not depend on $T$.

\subsection{Good events}

Recall the Stage~1 prediction event
\[
    \mathcal E_\theta
    :=
    \left\{
        \max_{1\le t\le T}|\hat u_t-u_t|\le \D
    \right\}.
\]
For each queried index $j$, let $\mathcal T_j(t)$ be the set of all rounds before $t$ in which probe action $j$ was actually played. This includes Stage~2 warmup probes, Stage~3 direct probes, and Stage~3 redirected probes from $j+1$ to $j$. Let
\[
    n_j(t):=|\mathcal T_j(t)|,
    \qquad
    \hat m_j(t):=
    \begin{cases}
    n_j(t)^{-1}\sum_{s\in\mathcal T_j(t)}o_s,& n_j(t)>0,\\
    0,& n_j(t)=0.
    \end{cases}
\]
For $n_j(t)>0$, define the predictable average
\[
    \bar m_j(t):=
    \frac{1}{n_j(t)}
    \sum_{s\in\mathcal T_j(t)}
    \E[o_s\mid \Hist_{s-1},x_s,\text{probe }j].
\]
If $n_j(t)=0$, set $\bar m_j(t)=0$. Define
\[
    b_j(t):=C_{\rm ucb}\sqrt{\frac{\log T}{\max\{1,n_j(t)\}}}.
\]
Let $\mathcal E_{\rm conf}$ denote the event that, simultaneously for every Stage~3 round $t$ and every $j\in\mathcal I$,
\[
    |\hat m_j(t)-\bar m_j(t)|\le \frac12 b_j(t).
\]

\subsection{Proof of Lemma~\ref{lem:stage1}}
\label{app:stage1-proof}

\begin{proof}
In Stage~1, prices are sampled independently as $p_t\sim\mathrm{Unif}[0,B]$. Since $y_t\in[0,B]$ almost surely under Assumption~\ref{ass:bdd},
\[
    \E[o_t\mid x_t,\xi_t]
    =
    \Pp(p_t\le y_t\mid x_t,\xi_t)
    =
    \frac{y_t}{B}.
\]
Thus, with $z_t:=B o_t$,
\[
    \E[z_t\mid x_t]
    =
    \E[y_t\mid x_t]
    =
    \langle x_t,\thst\rangle+
    \E[\xi_t\mid x_t]
    =
    \langle x_t,\thst\rangle,
\]
where the last equality uses Assumption~\ref{ass:bdd}. Therefore the Stage~1 regression model is
\[
    z_t=x_t^\top\thst+\eta_t,
    \qquad
    \E[\eta_t\mid x_t]=0,
\]
with bounded noise $|\eta_t|\le 2B$.

Let
\[
    \widehat\Sigma_1:=\frac1{T_1}\sum_{t=1}^{T_1}x_tx_t^\top,
    \qquad
    \Sigma:=\E[x_tx_t^\top].
\]
By matrix Bernstein for bounded random-design covariates, for $T_1\gtrsim (B_x^4/\lambda_0^2)d\log T$,
\[
    \lambda_{\min}(\widehat\Sigma_1)\ge \lambda_0/2
\]
with probability at least $1-T^{-3}$. On this event, the Moore--Penrose estimator in Algorithm~\ref{alg:main} equals the usual inverse estimator, and
\[
    \hth-\thst
    =
    \widehat\Sigma_1^{-1}
    \left(\frac1{T_1}\sum_{t=1}^{T_1}x_t\eta_t\right).
\]
A vector Bernstein or coordinate-wise Hoeffding argument gives, with probability at least $1-T^{-3}$,
\[
    \left\|\frac1{T_1}\sum_{t=1}^{T_1}x_t\eta_t\right\|_2
    \le
    C B B_x\sqrt{\frac{d\log T}{T_1}}.
\]
Combining the two displays,
\[
    \|\hth-\thst\|_2
    \le
    \frac{C B B_x}{\lambda_0}
    \sqrt{\frac{d\log T}{T_1}}.
\]
Thus
\[
    \max_{1\le t\le T}|\hat u_t-u_t|
    \le
    B_x\|\hth-\thst\|_2
    \le
    C\frac{B B_x^2}{\lambda_0}
    \sqrt{\frac{d\log T}{T_1}}
    \le \D
\]
by the choice of $C_\theta$. A union bound over the covariance and noise events proves the lemma.
\end{proof}

\subsection{Proof of Lemma~\ref{lem:markdown-sandwich}}
\label{app:sandwich-proof}

\begin{proof}
On $\mathcal E_\theta$, $|\hat u_t-u_t|\le\D$. When probe index $j$ is played in Stages~2--3, Assumption~\ref{ass:buffer} ensures that clipping does not occur, so
\[
    p_t=\hat u_t+w_{j+1}-3\D.
\]
Since $w_{j+1}=w_j+2\D$ and $w_{j-1}=w_j-2\D$,
\[
    p_t-u_t
    =
    w_{j+1}-3\D+(\hat u_t-u_t)
    \in
    [w_j-2\D,w_j]
    =
    [w_{j-1},w_j].
\]
Because $S$ is non-increasing,
\[
    S(w_j)
    \le
    S(p_t-u_t)
    \le
    S(w_{j-1}).
\]
The middle term equals $\E[o_t\mid\Hist_{t-1},x_t,\text{probe }j]$ by the definition of $S$, proving the claim.
\end{proof}

\subsection{Proof of Lemma~\ref{lem:raw-confidence}}
\label{app:confidence-proof}

\begin{proof}
Fix a queried index $j$. Let $\tau_{j,m}$ be the stopping time of the $m$-th play of probe action $j$, where a play may arise from Stage~2, a direct Stage~3 probe, or a redirected Stage~3 probe from $j+1$ to $j$. Define
\[
    Y_{j,m}:=o_{\tau_{j,m}},
    \qquad
    \mu_{j,m}:=
    \E[o_{\tau_{j,m}}\mid \Hist_{\tau_{j,m}-1},x_{\tau_{j,m}},\text{probe }j].
\]
Then $Y_{j,m}-\mu_{j,m}$ is a bounded martingale-difference sequence with respect to the stopped filtration generated by the probe times. The adaptivity of the probe times is harmless because the action is chosen before observing $o_{\tau_{j,m}}$.

For any fixed $j$ and $m$, Hoeffding--Azuma gives
\[
    \Pp\left(
    \left|\sum_{r=1}^m(Y_{j,r}-\mu_{j,r})\right|
    \ge
    C\sqrt{m\log T}
    \right)
    \le
    2T^{-5}
\]
for a sufficiently large universal constant $C$. Taking a union bound over at most $M\le T$ queried indices and $m\le T$ sample sizes yields, with probability at least $1-T^{-2}$,
\[
    \left|\frac1m\sum_{r=1}^mY_{j,r}
    -
    \frac1m\sum_{r=1}^m\mu_{j,r}\right|
    \le
    \frac12 C_{\rm ucb}\sqrt{\frac{\log T}{m}}
\]
for all $j$ and $m\ge1$, after increasing $C_{\rm ucb}$. This is exactly the event $\mathcal E_{\rm conf}$.

On $\mathcal E_\theta$, Lemma~\ref{lem:markdown-sandwich} gives $S(w_j)\le\mu_{j,m}\le S(w_{j-1})$ for every probe of index $j$. Hence $S(w_j)\le\bar m_j(t)\le S(w_{j-1})$. Combining this with $|\hat m_j(t)-\bar m_j(t)|\le b_j(t)/2$ gives
\[
    \hat m_j(t)+b_j(t)
    \ge
    \bar m_j(t)+\frac12 b_j(t)
    \ge
    S(w_j),
\]
and
\[
    \hat m_j(t)+b_j(t)
    \le
    \bar m_j(t)+\frac32 b_j(t)
    \le
    S(w_{j-1})+2b_j(t),
\]
where the constant $2$ is used for notational simplicity. This proves the lemma.
\end{proof}

\subsection{Proof of Lemma~\ref{lem:optimism}}
\label{app:optimism-proof}

\begin{proof}
Fix a Stage~3 round $t$, and let $w_t^\star\in[-c,c]$ be an optimal residual. Choose $q\in\mathcal I$ such that
\[
    w_t^\star\in[w_q,w_{q+1}].
\]
Such an index exists because $w_0=-c$ and $w_M\ge c$. On $\mathcal E_\theta$,
\[
    \hat u_t+w_{q+1}+\D
    \ge
    u_t+w_{q+1}
    \ge
    u_t+w_t^\star.
\]
Because $S$ is non-increasing and $w_q\le w_t^\star$,
\[
    S(w_q)\ge S(w_t^\star).
\]
By Lemma~\ref{lem:raw-confidence},
\[
    \hat m_q(t)+b_q(t)
    \ge S(w_q)
    \ge S(w_t^\star).
\]
Since $S(w_t^\star)\le1$, the truncation does not hurt this lower bound:
\[
    \min\{1,\hat m_q(t)+b_q(t)\}
    \ge S(w_t^\star).
\]
Therefore
\[
    U_{t,q}
    =
    (\hat u_t+w_{q+1}+\D)
    \min\{1,\hat m_q(t)+b_q(t)\}
    \ge
    (u_t+w_t^\star)S(w_t^\star)
    =V_t^\star.
\]
Taking the maximum over $j$ proves the claim.
\end{proof}

\subsection{Proof of Lemma~\ref{lem:one-round}}
\label{app:one-round-proof}

\begin{proof}
By Lemma~\ref{lem:optimism} and the choice of $j_t$,
\[
    V_t^\star\le U_{t,j_t}.
\]
We compare this selected score with the revenue of the action actually played.

\paragraph{Direct-probe mode.}
Let $j=j_t$. The algorithm plays probe $j$, so
\[
    p_t=\hat u_t+w_{j+1}-3\D,
    \qquad
    a_{t,j}:=\hat u_t+w_{j+1}+\D=p_t+4\D.
\]
Let $m_{t,j}$ be the conditional purchase probability of this played action. Since $\min\{1,x\}\le x$ for all $x$,
\[
    U_{t,j}
    \le
    (p_t+4\D)(\hat m_j+b_j).
\]
On $\mathcal E_{\rm conf}$ and by Lemma~\ref{lem:markdown-sandwich},
\[
    \hat m_j+b_j
    \le
    S(w_{j-1})+2b_j.
\]
Moreover $m_{t,j}\ge S(w_j)$, so
\[
    S(w_{j-1})=S(w_j)+\alpha_j\le m_{t,j}+\alpha_j.
\]
Thus
\[
    \hat m_j+b_j
    \le
    m_{t,j}+\alpha_j+2b_j.
\]
Therefore
\begin{align*}
    V_t^\star-\pi(x_t,p_t)
    &\le
    U_{t,j}-p_tm_{t,j} \\
    &\le
    (p_t+4\D)(m_{t,j}+\alpha_j+2b_j)-p_tm_{t,j} \\
    &\le
    4\D+(B+4\D)\alpha_j+2(B+4\D)b_j.
\end{align*}
For all sufficiently large horizons $\D\le1$; constants are absorbed into $C$ and $C_B$. This proves
\[
    V_t^\star-\pi(x_t,p_t)
    \le C\D+B\alpha_j+C_B b_j.
\]

\paragraph{Redirect mode, $j_t\ge1$.}
Let $j=j_t\ge1$. The algorithm plays probe $j-1$, so
\[
    p_t=\hat u_t+w_j-3\D,
    \qquad
    a_{t,j}:=\hat u_t+w_{j+1}+\D=p_t+6\D.
\]
Let $m^-_{t,j}$ be the conditional purchase probability of the played probe $j-1$. By Lemma~\ref{lem:markdown-sandwich},
\[
    m^-_{t,j}\ge S(w_{j-1}).
\]
For the selected score at index $j$, Lemma~\ref{lem:raw-confidence} gives
\[
    \hat m_j+b_j\le S(w_{j-1})+2b_j\le m^-_{t,j}+2b_j.
\]
Again using $\min\{1,x\}\le x$,
\begin{align*}
    V_t^\star-\pi(x_t,p_t)
    &\le
    U_{t,j}-p_tm^-_{t,j} \\
    &\le
    (p_t+6\D)(m^-_{t,j}+2b_j)-p_tm^-_{t,j} \\
    &\le
    6\D+2(B+6\D)b_j.
\end{align*}
Redirect mode occurs only when $b_j\le\D$, so the last display is at most $C_B\D$.

\paragraph{Boundary case $j_t=0$.}
When $j_t=0$, the algorithm plays probe $0$. This is the direct-probe argument with
\[
    \alpha_0=S(w_{-1})-S(w_0)=1-1=0,
\]
because $w_0=-c$ and $S(w_0)=1$ under the support convention. In redirect mode, $b_0\le\D$, so the regret is again at most $C_B\D$.
\end{proof}

\subsection{Proof of Lemma~\ref{lem:stage3-good}}
\label{app:cumulative-proof}

\begin{proof}
Let $\mathcal P$ be the set of Stage~3 rounds in direct-probe mode and $\mathcal R$ the set of Stage~3 rounds in redirect mode or the boundary case. By Lemma~\ref{lem:one-round}, the redirect rounds contribute at most
\[
    \sum_{t\in\mathcal R} C_B\D
    \le
    C_BT\D
    =
    \widetilde O(T^{2/3}).
\]

For direct-probe rounds, Lemma~\ref{lem:one-round} gives three terms. The grid-scale term is
\[
    \sum_{t\in\mathcal P} C\D
    \le
    CT\D
    =
    \widetilde O(T^{2/3}).
\]
For the confidence-radius term, each direct probe of index $j$ increments $n_j$. Hence
\begin{align*}
    \sum_{t\in\mathcal P} b_{j_t}(t)
    &\le
    C_{\rm ucb}\sqrt{\log T}
    \sum_{j=0}^{M-1}
    \sum_{m=1}^{n_j(T)}\frac1{\sqrt m} \\
    &\le
    2C_{\rm ucb}\sqrt{\log T}
    \sum_{j=0}^{M-1}\sqrt{n_j(T)} \\
    &\le
    2C_{\rm ucb}\sqrt{\log T}
    \sqrt{M\sum_j n_j(T)} \\
    &\le
    \widetilde O(\sqrt{MT})
    =
    \widetilde O(T^{2/3}),
\end{align*}
where $M=O(\D^{-1})=\widetilde O(T^{1/3})$.

For the adjacent-jump term, index $j$ is directly probed only while
\[
    b_j(t)=C_{\rm ucb}\sqrt{\frac{\log T}{\max\{1,n_j(t)\}}}>\D.
\]
Thus the number of direct probes of any fixed index $j$ is at most $C_{\rm ucb}^2\log T/\D^2+1$. Therefore
\[
    \sum_{t\in\mathcal P}\alpha_{j_t}
    \le
    \left(C\frac{\log T}{\D^2}+1\right)
    \sum_{j=0}^{M-1}\alpha_j
    \le
    C\frac{\log T}{\D^2}+1
    =
    \widetilde O(T^{2/3}),
\]
using $\sum_j\alpha_j\le1$. Combining the three direct-probe terms and the redirect contribution proves the lemma.
\end{proof}

\subsection{Proof of Theorem~\ref{thm:main}}
\label{app:main-proof}

\begin{proof}
Stages~1 and~2 each last $O(T^{2/3})$ rounds and incur per-round regret at most $B$, so their expected contribution is $O(T^{2/3})$.

Let
\[
    \mathcal E:=\mathcal E_\theta\cap\mathcal E_{\rm conf}.
\]
By Lemmas~\ref{lem:stage1} and~\ref{lem:raw-confidence}, $\Pp(\mathcal E^c)\le O(T^{-2})$. On $\mathcal E$, Lemma~\ref{lem:stage3-good} gives Stage~3 regret $\widetilde O(T^{2/3})$. On $\mathcal E^c$, total regret is at most $BT$, so the failure-event contribution is at most $BT\cdot O(T^{-2})=O(1/T)$. Therefore
\[
    R_T=\E[\mathcal R_T]
    \le
    \widetilde O(T^{2/3}).
\]
\end{proof}
\begin{remark}
    The proof yields the following informal decomposition, suppressing logarithmic
factors and problem-dependent constants:
\[
R_T
\;\lesssim\;
B(T_1+T_w)
+
T\Delta
+
\sqrt{MT\log T}
+
\frac{\log T}{\Delta^2},
\]
where \(M=O(c/\Delta)\) is the number of queried residual-grid indices and
\[
\Delta
=
C_\theta
\frac{B B_x^2}{\lambda_0}
\sqrt{\frac{d\log T}{T_1}}.
\]
The four terms correspond respectively to the two initial stages, grid-scale loss,
UCB confidence accumulation, and the cumulative adjacent-jump cost. Taking
\(T_1=T_w=\lceil T^{2/3}\rceil\) and \(M=O(\Delta^{-1})\) gives
\(\widetilde O(T^{2/3})\).
\end{remark}

\subsection{Lower-bound alignment under the assumptions of this paper}
\label{app:lower-bound-alignment}

We briefly justify that the assumptions used in Theorem~\ref{thm:main} do not remove the standard \(T^{2/3}\) lower-bound barrier. The point is that a non-contextual posted-price hard instance can be embedded into our contextual linear-valuation model while satisfying bounded support, zero-mean noise, stochastic well-conditioned contexts, and the price-buffer condition.

Consider any one-dimensional posted-price lower-bound family supported on an interior valuation interval \([\underline v,\bar v]\subset(0,B)\) for which every pricing algorithm incurs \(\Omega(T^{2/3})\) regret; the construction of \citet{kleinberg2003value} can be placed on such an interval by an affine rescaling of prices and valuations. Let \(V\) denote the valuation in such a hard instance and set \(u_0:=\mathbb E[V]\), \(\xi:=V-u_0\). Then \(\mathbb E[\xi]=0\) and \(\xi\in[-c,c]\) for \(c:=\max\{u_0-\underline v,\bar v-u_0\}\).

Now construct a contextual instance in dimension \(d\ge 2\) as follows. Let
\[
    x_t=(1,Z_{t,2},\ldots,Z_{t,d}),
\]
where \(Z_{t,2},\ldots,Z_{t,d}\) are independent Rademacher random variables, independent across \(t\) and independent of \(\xi_t\). Let
\[
    \theta^\star=(u_0,0,\ldots,0).
\]
Then \(x_t^\top\theta^\star=u_0\), so the buyer valuation is \(y_t=u_0+\xi_t=V_t\), exactly the original non-contextual valuation. The context distribution is i.i.d. and well-conditioned, since \(\mathbb E[x_tx_t^\top]=I_d\). The dummy covariates carry no valuation information, but they ensure that the full-rank context assumption is satisfied.

The bounded-support and normalization assumptions hold by construction. The price-buffer condition also holds whenever the hard instance is placed in an interior interval: \(x_t^\top\theta^\star-c\ge \underline v>0\) and \(x_t^\top\theta^\star+c\le \bar v\le B\). Therefore this contextual instance satisfies Assumptions~\ref{ass:bdd}--\ref{ass:buffer}. Any contextual pricing algorithm for our model, when run on this instance, induces a non-contextual posted-price algorithm with the same revenue process and the same regret. Hence an \(o(T^{2/3})\) regret bound for our model would contradict the non-contextual lower bound.

The contextual lower bound of \citet{xu2022towards} provides a complementary comparison: it shows that the same \(T^{2/3}\) barrier already arises under Lipschitz demand in a contextual linear-valuation setting. The embedding above is sufficient for the formal minimax lower bound under the assumptions of this paper, while the Xu--Wang lower bound places our rate in the broader contextual-pricing landscape.

\subsection{Experimental implementation details}
\label{app:exp-details}
All reported experiments use pseudo-regret computed from the known synthetic survival function. For a posted price $p_t$ and linear value $u_t$, the algorithm's expected revenue is $p_tS(p_t-u_t)$, where throughout the paper $S(w)=\Pr(\xi\ge w)$ uses the inclusive convention matching the purchase event $p_t\le y_t$. The oracle benchmark is $\max_{p\in[0,B]}pS(p-u_t)$. For the uniform-noise instance this maximum is computed from the piecewise-quadratic form of the revenue curve. For the cliff-noise instance, where $\Pr(\xi=0)=0.3$, the candidate set additionally includes the atom price $p=u_t$.

Exact D2-EXP4 requires enumerating a discretized family of valuation parameters and noise distributions, which is computationally prohibitive at the horizons in Figure~\ref{fig:loglog}. We therefore use a sampled-policy implementation. Each sampled policy is indexed by a candidate linear parameter and a candidate monotone survival curve on a residual grid. In our runs, the default ensemble contains $K_{\rm policy}=2048$ policies, sampled from $K_\theta=256$ parameter candidates and $K_F=64$ survival-function candidates. The parameter pool samples the intercept from $[1,3]$ and the remaining coefficients from $[0,0.3]$, together with a few structured anchor vectors. The survival-function pool contains uniform-like, step-like, and cliff-like structured candidates, supplemented by randomly generated monotone survival curves. EXP4 is then run over this sampled ensemble using an action grid with spacing $\gamma=\mathrm{clip}(T^{-1/4},0.02,0.10)$, learning rate $\eta=\min\{0.5,\sqrt{\log(K_{\rm policy})/(T K_{\rm act})}\}$, and explicit exploration probability $\min\{0.2,\sqrt{K_{\rm act}\log(K_{\rm policy})/T}\}$, where $K_{\rm act}$ is the number of price-grid actions. The displayed D2-EXP4 curves should therefore be read as a computationally tractable sampled-policy implementation of the D2-EXP4 reduction, not as an exhaustive implementation of the full exponentially large policy class.

\end{document}